\documentclass{article}

\usepackage{microtype}
\usepackage{graphicx}
\usepackage{subcaption}
\usepackage{booktabs} 

\usepackage{hyperref}
\usepackage{wrapfig}

\usepackage[accepted]{icml2026}

\usepackage{amsmath}
\usepackage{amssymb}
\usepackage{mathtools}
\usepackage{amsthm}
\usepackage{multirow}
\usepackage{enumitem}

\usepackage[capitalize,noabbrev]{cleveref}

\theoremstyle{plain}
\newtheorem{theorem}{Theorem}[section]
\newtheorem*{theorem*}{Theorem}

\newtheorem{lemma}[theorem]{Lemma}
\newtheorem*{lemma*}{Lemma}

\theoremstyle{definition}
\newtheorem{definition}[theorem]{Definition}

\theoremstyle{remark}

\usepackage[textsize=tiny]{todonotes}

\def\algname{\textsc{AvAtar}}

\icmltitlerunning{\algname: Learning to Align via Active Optimal Transport}
\begin{document}

\twocolumn[
  \icmltitle{\algname: Learning to Align via Active Optimal Transport}



  \icmlsetsymbol{equal}{*}

  \begin{icmlauthorlist}
    \icmlauthor{Qi Yu}{uiuc}
    \icmlauthor{Ruizhong Qiu}{uiuc}
    \icmlauthor{Zhichen Zeng}{uiuc}
    \icmlauthor{My T. Thai}{uf}
    \icmlauthor{Huan Liu}{asu}
    \icmlauthor{Hanghang Tong}{uiuc}
  \end{icmlauthorlist}

  \icmlaffiliation{uiuc}{University of Illinois Urbana-Champaign}
  \icmlaffiliation{uf}{University of Florida}
  \icmlaffiliation{asu}{Arizona State University}

  \icmlcorrespondingauthor{Hanghang Tong}{htong@illinois.edu}

  \icmlkeywords{Machine Learning, ICML}

  \vskip 0.3in
]



\printAffiliationsAndNotice{}  


\begin{abstract}
Alignment plays a fundamental role in many machine learning problems, such as multi-network analysis, multimodal learning, and point cloud registration.
Recent works increasingly leverage optimal transport (OT) for distributional alignment, whose effectiveness
largely depends on sparse supervision that is hard or costly to obtain in practice.
Existing works, however, largely overlook how to actively acquire high-quality supervision to improve their alignment performance under OT frameworks. 
In this paper, we propose a principled \underline{a}cti\underline{v}e \underline{a}lignment framework for op\underline{t}im\underline{a}l t\underline{r}ansport alignment called \algname. 
We quantify the informativeness of a candidate by measuring its gradient-based impact on the global alignment result, computed as the gradient propagation from the global alignment result to all possible supervisions of the candidate through the entropy-regularized OT formulation.
While differentiating through OT is challenging given its constrained nature, we leverage the adjoint-state method to reformulate the computation to a linear system solvable by the conjugate gradient method with linear complexity and guaranteed convergence.
By encoding the global alignment result via effective utility functions, \algname\ is applicable to general alignment problems under the OT framework.
Extensive experiments on three representative alignment tasks demonstrate the effectiveness, scalability, and generalizability of the proposed \algname.
\end{abstract}


\section{Introduction}\label{sec:intro}

Alignment is a critical steppingstone behind a wide range of machine learning problems, including but not limit to multi-network analysis~\cite{du2021new, yan2022dissecting, slotalign, wang2023networked, planetalign,zeng2023generative,zeng2024graph,zeng2024hierarchical,zeng2025pave}, multimodal and cross-domain learning~\cite{yilmaz2019cross, chen2020graph, cheng2022cross, xu2024discrete, yoo2024ensuring, ning2025graph4mm}, and point cloud registration.~\citep{yu2021cofinet, yu2023rotation, haitman2024umeregrobust}. The general goal of these problems is to identify meaningful correspondence between two sets of data points, which facilitate various downstream machine learning tasks. For example, aligning nodes from different networks enables personalized recommendation across social platforms and helps fraud detection across transaction networks~\cite{parrot, joena}. Aligning entities from different data modalities, e.g., image-text matching, enables automatic labeling of cross-modal data used for large-scale pre-training of multimodal foundation models~\cite{han2021pre, gan2022vision, liang2024foundations,bartan2025fineamp,wei2026diffkgw}.

Recently, optimal transport (OT)~\cite{gabriel2019computational} has been increasingly adopted as an effective tool for solving alignment problems in general. By associating two sets of objects to be aligned (e.g., nodes from two networks) with two discrete probability distributions serving as the marginal constraints, OT-based alignment methods infer object-level alignment from the solved transport plan under carefully crafted cost function for specific tasks.
Empowered by informative cost function and constrained optimization, OT-based methods naturally learn robust and deterministic
alignment from a global view~\cite{joena, planetalign}, and have demonstrated remarkable performance across diverse alignment tasks~\cite{chen2020graph, qin2023geotransformer, joena}.
Despite their success, Figure~\ref{fig:teaser} shows that OT-based alignment approaches are sensitive to the quantity and quality of supervisions~\cite{planetalign}, yet obtaining high-quality supervision is costly in practice~\cite{attent, gan2022vision, liu2024class,li2024survey}. To date, few works have investigated how to actively acquire high-quality supervision in weakly supervised or unsupervised settings to effectively improve the performance of OT-based methods.

Although there exists sparse literature on active alignment, they bear the following three key limitations for OT-based methods.
Firstly (\textit{Limitation \#1}),
existing active alignment methods are not tailored to OT thus fail to utilize key components behind OT-based alignment, such as the cost function and marginal constraints, which directly affect the alignment performance~\cite{malmi2017active, cheng2019deep, attent}. Secondly (\textit{Limitation \#2}), there lacks a principled method to quantify how newly acquired supervision would propagate through the OT formulation, making it difficult to assess the direct impact of a query on the alignment results~\cite{malmi2017active, cheng2019deep}. Thirdly (\textit{Limitation \#3}), prior efforts on active alignment mainly focus on designing task-specific query strategies~\cite{ren2019meta, attent}, such as active consistency-based network alignment methods, which are not readily generalizable to other alignment methods (e.g., OT-based methods) or alignment tasks (e.g., cross-domain alignment).

\begin{figure}[t]
    \centering
    \begin{subfigure}[b]{0.49\linewidth}
        \centering
        \includegraphics[width=\textwidth, trim = 8 5 6 7, clip]{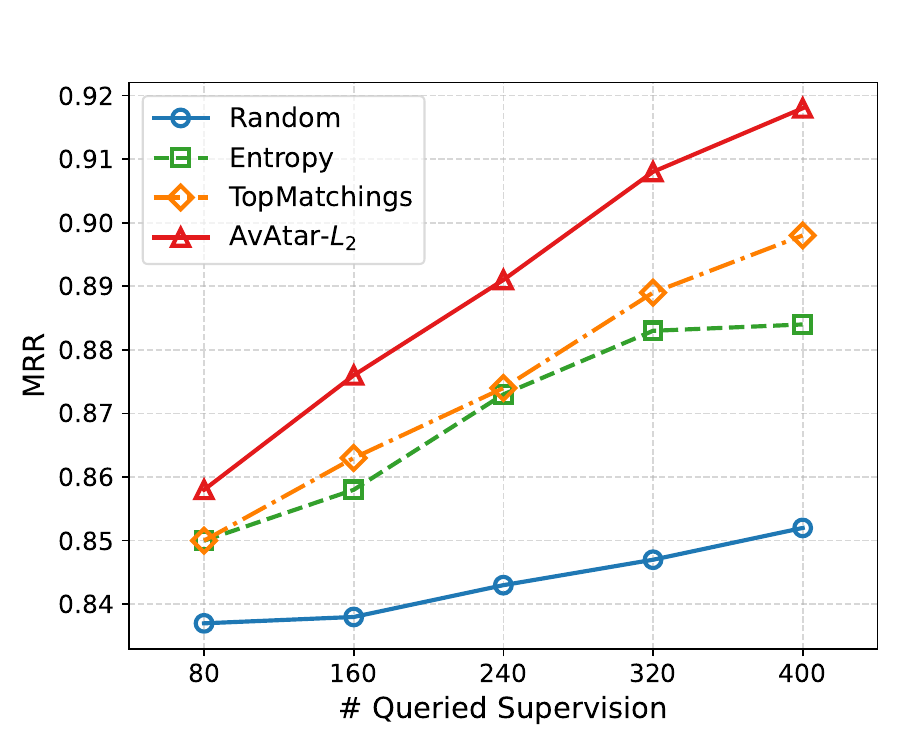}
        \vspace{-15pt}
        \caption{Network Alignment}
    \end{subfigure}
    \hfill
    \begin{subfigure}[b]{0.49\linewidth}
        \centering
        \includegraphics[width=\textwidth, trim = 8 5 6 7, clip]{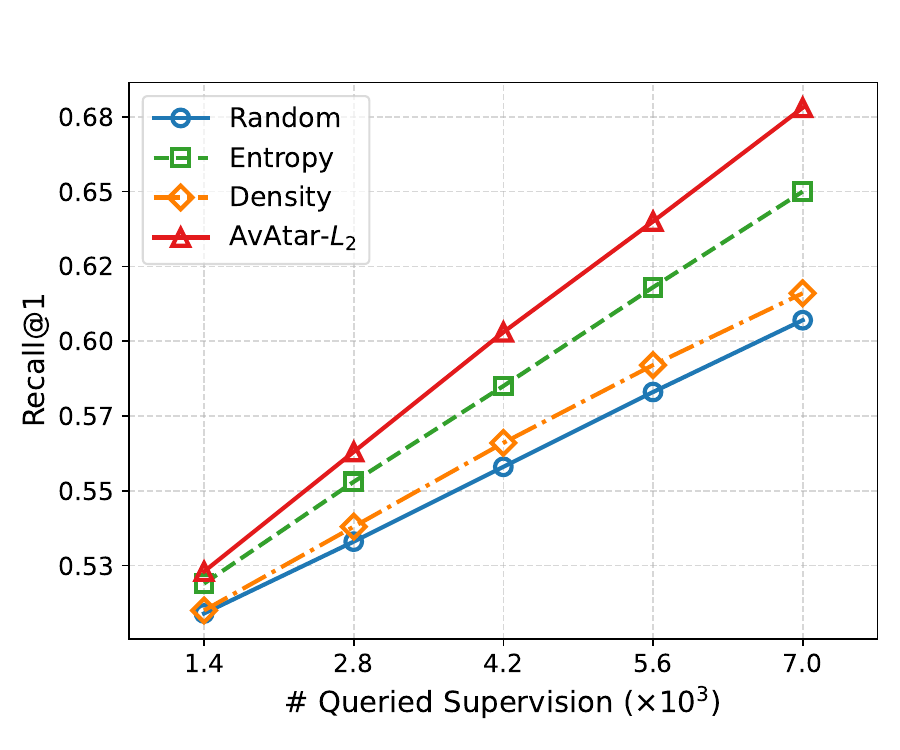}
        \vspace{-15pt}
        \caption{Image-Text Grounding}
    \end{subfigure}
    \vspace{-5pt}
    \caption{\textbf{Sensitivity of active OT-based alignment methods w.r.t. the quality and quantity of supervision on network alignment and image-text grounding}. Observations: \textbf{(1. Quantity)} The performance of OT-based methods improve significantly by up to $15\%$ with increased supervision level. \textbf{(2. Quality)} Under the same number of supervision, the performance of OT-based methods can differ significantly (up to 7\%) by different query strategies, e.g., the proposed \algname-$L_2$ (\textcolor{red}{red}) vs. Random (\textcolor{blue}{blue}).}
    \label{fig:teaser}
    \vspace{-15pt}
\end{figure}

In this paper, we address these limitations by proposing a principled active alignment framework based on optimal transport, called \algname.
\algname\ is designed intrinsically for OT-based alignment, which evaluates the informativeness of candidates by their posterior query impact on the global alignment results of OT, comprehensively utilizing key elements of OT to select the best candidates to query (\textit{Limitation \#1}).
To quantify the posterior query impact of a candidate, \algname\ computes the gradient propagation from the global alignment result to all possible supervision signals of the candidate
through the entropy-regularized OT formulation, capturing exactly how a new label would affect OT-based alignment through gradients (\textit{Limitation \#2}). 
However, a major challenge lies in differentiating through the OT formulation, since the transport plan is defined implicitly as the solution of a large-scale constrained optimization problem.
To tackle this, we leverage the adjoint-state method to reformulate the computation of gradient-based impact to a linear system which can be solved efficiently via the conjugate gradient method with linear complexity and guaranteed convergence.
By encoding the global alignment results via effective utility functions defined over the transport plan of OT, \algname\ is applicable to OT-based alignment methods across diverse alignment tasks with minimal modification (\textit{Limitation \#3}).

To validate the effectiveness of \algname, we conduct comprehensive experiments across three representative alignment tasks, including network alignment (NA) and two cross-domain alignment (CDA) tasks: image-text retrieval and image-text grounding. Extensive experiments covering 8 datasets, 4 OT-based alignment algorithms, and 9 baseline active learning methods demonstrate that \algname\ consistently outperform existing active learning approaches under the same query budget. We also show empirically that \algname\ achieves a good balance between alignment performance and efficiency, making it applicable to large-scale alignment problems.

Our main contributions are summarized as follows:
\begin{itemize}[nosep, wide=0pt, leftmargin=*, after=\strut, label=\textbullet]
    \item \textbf{Problem.} To our best knowledge, we are the first to formalize the timely and important problem of active learning for OT-based alignment.
    \item \textbf{Method.} We propose a novel method \algname\ to quantify the informativeness of candidates by their impact on the alignment results, measured by gradient propagation through the OT formulation.
    \item \textbf{Analysis.} We provide extensive theoretical analysis of \algname\ to establish its correctness, linear time complexity, and linear convergence.
    \item \textbf{Evaluation.} Extensive experiments across diverse alignment tasks show that \algname\ consistently improves alignment performance and achieves a good balance between effectiveness and efficiency.
\end{itemize}

\section{Preliminaries}\label{sec:pre}
In this section, we introduce preliminaries on optimal transport in Section~\ref{ssec:pre-ot}, followed by OT-based alignment problem in Section~\ref{ssec:pre-ot-align}. In this paper, we use bold uppercase letters for matrices (e.g., $\mathbf{T}$), bold lowercase letters for vectors (e.g., $\boldsymbol{\mu}$), calligraphic uppercase letters for sets (e.g., $\mathcal{X}$) and lowercase letters for scalars (e.g., $k$). Table~\ref{tab:symbol} summarizes the main symbols used throughout the paper.

\subsection{Optimal Transport}\label{ssec:pre-ot}
OT has emerged as a powerful mathematical tool for aligning two distributions~\cite{santambrogio2015optimal}. Let $\boldsymbol{\mu}=\sum_{i=1}^{n}\mu_i\delta_{x_i}$ and $\boldsymbol{\nu}=\sum_{j=1}^{m}\nu_i\delta_{y_j}$ be two discrete probability distributions where $\delta$ denotes the Dirac measure.
The discrete optimal transport problem seeks an optimal transport plan $\mathbf{T}$ that minimizes the total transport cost as follows:
\vspace{-3pt}
\begin{equation}\label{eq:w-dist}
    \min_{\mathbf{T}\in\Pi(\boldsymbol{\mu}, \boldsymbol{\nu})}\left<\mathbf{C}, \mathbf{T}\right>
\end{equation}
where $\Pi(\boldsymbol{\mu}, \boldsymbol{\nu}):=\left\{\mathbf{T}\in\mathbb{R}^{n\times m}_+|\mathbf{T}\mathbf{1}_m=\boldsymbol{\mu},\mathbf{T}^\top\mathbf{1}_n=\boldsymbol{\nu}\right\}$, $\mathbf{C}\in\mathbb{R}^{n\times m}_{\geq 0}$ is the cost matrix where $\mathbf{C}_{i,j}$ measures the cost of transporting mass from the support point $x_i$ of $\boldsymbol{\mu}$ to point $y_j$ of $\boldsymbol{\nu}$. The optimal value of Eq.~\eqref{eq:w-dist} defines the \textit{Wasserstein distance} between $\boldsymbol{\mu}$ and $\boldsymbol{\nu}$ under the cost matrix $\mathbf{C}$, and the resulting transport plan encodes the soft correspondence between points from the two distributions.

While Eq.~\eqref{eq:w-dist} induces a linear programming problem of cubic complexity which is infeasible for large-scale applications, \cite{gabriel2019computational} introduces entropic regularization into Eq.~\eqref{eq:w-dist} to approximate the original OT formulation:
\vspace{-5pt}
\begin{equation}\label{eq:ent-ot}
    \min_{\mathbf{T}\in\Pi(\boldsymbol{\mu}, \boldsymbol{\nu})}\left<\mathbf{C}, \mathbf{T}\right>-\epsilon\,\text{Ent}(\mathbf{T})
\end{equation}
\vspace{-15pt}

where $\text{Ent}(\mathbf{T}):=-\sum_{i,j}\mathbf{T}_{i,j}\left(\log\mathbf{T}_{i,j}-1\right)$ and $\epsilon>0$ denotes the entropic regularization weight. Eq.~\eqref{eq:ent-ot} yields an $\epsilon$-strongly convex optimization problem that can be solved more efficiently with a quadratic complexity via the Sinkhorn algorithm~\cite{nemirovski1999complexity}. 

\vspace{-5pt}
\subsection{OT-based Alignment}\label{ssec:pre-ot-align}

A generalized definition for alignment problems based on OT can be summarized as follows~\cite{parrot, joena, chen2020graph}:

\begin{definition}\label{def:ot-align}
    \textit{OT-based Alignment}\\
    \textbf{Given:} (1) two sets of objects $\mathcal{X}=\{x_i\}^n_{i=1}$ and $\mathcal{Y}=\{y_j\}^m_{j=1}$ to be aligned, and their associated marginal distributions $\boldsymbol{\mu}, \boldsymbol{\nu}$, (2) a cost function $\mathbf{C}\in\mathbb{R}^{n\times m}$, (3) an alignment supervision matrix $\mathbf{H}\in\{0, 1\}^{n\times m}$ with $\mathbf{H}_{i,j}=1$ indicating prior alignment between $x_i$ and $y_j$.\\
    \textbf{Output:} an optimal transport plan $\mathbf{T}^*\in\Pi(\boldsymbol{\mu}, \boldsymbol{\nu})$ indicating the soft correspondence between $\mathcal{X}$ and $\mathcal{Y}$:
        \begin{equation}\label{eq:ot-align}
            \mathbf{T}^*=\mathop{\arg\min}\limits_{\mathbf{T}\in\Pi(\boldsymbol{\mu}, \boldsymbol{\nu})} \langle\tilde{\mathbf{C}},~\mathbf{T}\rangle, \tilde{\mathbf{C}}=\left(\mathbf{1}_{n\times m}-\beta \mathbf{H}\right)\odot\mathbf{C}
        \end{equation}
        \vspace{-10pt}
\end{definition}
Eq.~\eqref{eq:ot-align} follows the common practice~\cite{parrot, joena} of integrating supervisions into OT-based alignment by penalizing the cost entries of aligned pairs. $\beta$ in Eq.~\eqref{eq:ot-align} denotes the penalizing factor. Under unsupervised settings, $\mathbf{H}$ becomes a zero matrix, making Eq.~\eqref{eq:ot-align} equivalent to Eq.~\eqref{eq:w-dist}.

\begin{table}[t]
\centering
\setlength{\belowcaptionskip}{-.2\baselineskip}
  \caption{Symbols and Notations.}
  \vspace{-5pt}
  \label{tab:symbol}
  \begin{tabular}{@{}cc@{}}
    \toprule
    Symbol &Definition\\
    \midrule
    $\mathcal{X}, \mathcal{Y}$ & object sets \\
    $\boldsymbol{\mu}, \boldsymbol{\nu}$ & marginal distributions of OT \\
    $\mathbf{C}$ & cost function of OT \\
    $\mathbf{T}$ & transport map of OT \\
    $\mathbf{H}$ & alignment supervision matrix \\
    $f(\cdot), f(\mathbf{T})$ & alignment utility function defined over $\mathbf{T}$ \\
    \midrule
    $\mathbf{1}$ & an all-one vector/matrix\\
    $\mathbf{X}_{i,j}$ & entry of matrix $\mathbf{X}$ in row $i$ and column $j$ \\
    $\odot$ &Hadamard product\\
    $\langle\cdot,\cdot\rangle$ & inner product\\
    \bottomrule
  \end{tabular}
  \vspace{-10pt}
\end{table}

\section{Problem Definition}\label{sec:prob}
We study the problem of active alignment based on optimal transport, whose goal is to maximally increase the performance of OT-based alignment methods by selectively acquiring a fixed amount of supervision from an oracle (e.g., a human annotator).
We assume that the oracle can answer queries of the following form~\cite{attent}:

\hspace*{1em} \textit{Given an object $x_s$ from the source set $\mathcal{X}$, which target object $y_t\in\mathcal{Y}$ is the correct alignment of $x_s$?}

Based on this, we formally define the active OT-based alignment problem as follows:
\begin{definition}\label{def:act-ot}
  \textit{Active OT-based Alignment}\\
  \textbf{Given:} (1) two sets of objects $\mathcal{X}=\{x_i\}^n_{i=1}$ and $\mathcal{Y}=\{y_j\}^m_{j=1}$ to be aligned, (2) an OT-based alignment method with marginal distributions $\boldsymbol{\mu}, \boldsymbol{\nu}$, cost function $\mathbf{C}$, and solved transport plan $\mathbf{T}$, (3) an alignment supervision matrix $\mathbf{H}$ that encodes supervision signals, (4) a fixed query budget $k$, and (5) an oracle.\\
  \textbf{Output:} a set of $k$ objects in $\mathcal{X}$ for the oracle to label the correct alignment in $\mathcal{Y}$, which maximally improves the alignment performance for the unlabeled objects in $\mathcal{X}$.
\end{definition}

While the true alignment performance over unlabeled objects are typically unknown without access to test data, we quantify the quality of alignment results by effective utility functions introduced in Section~\ref{sec:method-query}.

Definition~\ref{def:act-ot} is general and applicable to a wide range of alignment problems. Specifically, for \textit{NA}, $\mathcal{X}$ and $\mathcal{Y}$ correspond to the node sets in the source and target networks; for \textit{CDA}, $\mathcal{X}$ and $\mathcal{Y}$ denote instances from two different domains or modalities, e.g. image and text. This unified problem formulation allows us to define a task-agnostic active learning template for OT-based alignment methods applicable to diverse alignment tasks.
\section{Methodology}\label{sec:method}

In this section, we present the proposed active OT-based alignment framework \algname. We first introduce a quantitative gradient-based method for evaluating the informativeness of querying a source object, in Section~\ref{sec:method-query}. Then, we introduce the proposed \algname\ which leverages this gradient-based method for solving the active OT-based alignment problem, in Section~\ref{sec:method-alg}. Finally, we provide theoretical analysis regarding the complexity and convergence of \algname, in Section~\ref{sec:method-ana}.

\subsection{Gradient-based Query Impact}\label{sec:method-query}

A fundamental principle in active learning is to query the most \textit{informative} objects for the learning task~\cite{attent, li2024survey,qiu2024gradient}, oftentimes via gradient-based impact estimation \cite{he2024on,chen2024wapiti,chen2026influence}. In the context of alignment, it is thus desirable to quantify the informativeness of querying a candidate object through its potential impact on the overall alignment results~\cite{attent}.
Accordingly, we propose to evaluate the informativeness of querying a source object by its posterior impact on the global alignment results, encoded by an effective utility function $f(\mathbf{T}):\mathbb{R}^{n\times m}\rightarrow \mathbb{R}$ defined over the transport plan $\mathbf{T}$. The utility function is designed to turn matrix $\mathbf{T}$ that indicates pairwise alignment into a scalar, quantifying the \textit{global quality} of the alignment. 

In this paper, we adopt two general-purpose utility functions for all alignment tasks, $f_{L_2}$ and $f_\text{entropy}$, and design an additional utility function $f_\text{consist}$ for the NA tasks. Definitions of different utility functions are listed in Table~\ref{tab:utility}. We differentiate \algname\ of different utility functions by suffixes, e.g., \algname-$L_2$.

\begin{table}[t]
    \centering
    \vspace{-7pt}
    \caption{Adopted utility functions $f(\mathbf{T})$ and their gradient w.r.t. the transport map $\mathbf{T}$. $\log(\cdot)$ denotes the element-wise logarithmic function, $\|\cdot\|_1$ is L1 norm, tr$(\cdot)$ denotes the trace of a matrix, and $\mathbf{M}_1,\mathbf{M}_2$ are the graph Laplacian matrices of the two networks.
    }
    \label{tab:utility}
    \vspace{-5pt}
    \resizebox{\linewidth}{!}{
    \begin{tabular}{@{}c|c|c@{}}
        \toprule
        \textbf{Description} & \textbf{Function $f(\mathbf{T})$} & $\nabla_{\mathbf{T}}f(\mathbf{T})$\\
        \midrule
        Squared $L_2$ norm & $\| \mathbf{T} \|_2^2$ & $2\mathbf{T}$\\
        Entropy            & $\| \mathbf{T} \odot \log(\mathbf{T}) \|_1$ & $\log(\mathbf{T})+\mathbf{1}_{n\times m}$\\
        Consistency        & $\text{tr}\left(\mathbf{T}^\top \mathbf{M}_1 \mathbf{T}\right)+\text{tr}\left(\mathbf{T} \mathbf{M}_2 \mathbf{T}^\top\right)$ & $2\mathbf{M}_1\mathbf{T}+2\mathbf{T}\mathbf{M}_2$\\
        \bottomrule
    \end{tabular}
    }
    \vspace{-10pt}
\end{table}

The intuition of $f_{L_2}$ and $f_\text{entropy}$ is straightforward: both functions encourage querying source objects whose true alignment leads to a more \textit{deterministic} result $\mathbf{T}$ that approximates a permutation matrix when $n\approx m$~\cite{gabriel2019computational, joena}. For the NA-specific function $f_\text{consist}$, it encourage the consistency principles widely adopted in NA~\cite{final, parrot}. Specifically, the first term $\text{tr}(\mathbf{T}^\top\mathbf{M}_1\mathbf{T})=\frac{1}{2}\sum_{i,j}\mathbf{A}^{(1)}_{i,j}\|\mathbf{T}_{i,:}-\mathbf{T}_{j,:}\|^2_2$ with $\mathbf{A}^{(1)}$ denoting the graph adjacency matrix of the first network. It encourages alignment from the source to target network that contributes most to the consistency principle: the 1-hop neighbors of aligned nodes across different networks should share similar alignment results (i.e. corresponding rows in $\mathbf{T}$). Similarly, the second term encourages consistency when the alignment direction is reversed.

For the rest of this subsection, we first define the impact of a pairwise objects query on an utility function and introduce how to calculate this gradient-based impact. Then, we formally define the posterior impact of an individual object query on the utility function.

\subsubsection{Pairwise Objects Query Impact}

\begin{definition}\label{def:pair-impact}
\textit{Pairwise Objects Query Impact}\\
Given a query of a pair of objects denoted as $p_{ij}=(x_i,y_j)$ where $x_i\in\mathcal{X}, y_j\in\mathcal{Y}$, and an utility function $f(\mathbf{T})$ defined over the transport plan $\mathbf{T}$, the query impact of $p_{ij}$ is defined as the \textit{gradient} of $f(\mathbf{T})$ w.r.t. the alignment supervision signal of this pair of objects, i.e., $\mathbf{H}_{i,j}$. Formally, the pairwise objects query impact is defined as $\mathcal{I}(p_{ij})=\nabla_{\mathbf{H}_{i,j}}f$.
\end{definition}

The calculation of $\mathcal{I}(p_{ij})$ for OT-based alignment method can be decomposed by the chain rule as follows
\begin{equation}\label{eq:pair-impact}
    \mathcal{I}(p_{ij})=\nabla_{\mathbf{H}_{i,j}}f=\langle\nabla_{\tilde{\mathbf{C}}}f,\nabla_{\mathbf{H}_{i,j}}\mathbf{\tilde{C}}\rangle
\end{equation}

The second term $\nabla_{\mathbf{H}_{i,j}}\tilde{\mathbf{C}}$ quantifies the impact of an added supervision to the supervised cost matrix $\tilde{\mathbf{C}}$. Following Eq.~\eqref{eq:ot-align}, $\nabla_{\mathbf{H}_{i,j}}\tilde{\mathbf{C}}$ can be computed easily as $\nabla_{\mathbf{H}_{i,j}}\tilde{\mathbf{C}}=-\beta\mathbf{C}_{i,j}\mathbf{E}$ where $\mathbf{E}_{i,j}=1$ and 0 otherwise.

The main computation falls into the first term $\nabla_{\tilde{\mathbf{C}}}f$, which quantifies the impact of changing the supervised cost matrix $\tilde{\mathbf{C}}$ on the utility function $f(\mathbf{T})$.
Computing $\nabla_{\tilde{\mathbf{C}}}f$ requires differentiating through the OT formulation, which is highly challenging for the following reason: as $\mathbf{T}$ is an implicit function w.r.t. $\tilde{\mathbf{C}}$ due to existence of marginal constraints, differentiating $\mathbf{T}$ w.r.t. $\tilde{\mathbf{C}}$, i.e., $\frac{\mathrm{d}\mathbf{T}}{\mathrm{d}\tilde{\mathbf{C}}}$, requires explicitly forming and inverting a Jacobian matrix of size $(nm)^2$, which would be computationally intractable at scale.

To tackle this computational challenge, we leverage the adjoint-state method to reformulate the computation of $\nabla_{\tilde{\mathbf{C}}}f$ to solving an adjoint linear system of size $(n+m)$. 

\begin{lemma}\label{lem:pair-impact}
    The gradient of a utility function $f(\mathbf{T})$ w.r.t. the cost function $\tilde{\mathbf{C}}$ under the entropy-regularized OT formulation can be computed by solving a linear system with respect to adjoint vectors $\mathbf{y}_\alpha\in\mathbb{R}^n,\mathbf{y}_\beta\in\mathbb{R}^m$ as follows,
    \begin{equation}\label{eq:pair-impact-cost}
        \nabla_{\tilde{\mathbf{C}}}f=\frac{1}{\epsilon}\mathbf{T}\odot\left(\mathbf{y}_\alpha\mathbf{1}_m^\top+\mathbf{1}_n\mathbf{y}_\beta^\top-\nabla_{\mathbf{T}}f\right)
    \end{equation}
    s.t.
    \begin{equation}\label{eq:linear}
        \underbrace{
        \begin{bmatrix}
        \text{diag}(\boldsymbol{\mu}) & \mathbf{T} \\
        \mathbf{T}^\top & \text{diag}(\boldsymbol{\nu})
        \end{bmatrix}
        }_{\mathbf{A}}
        \underbrace{
        \begin{bmatrix}
        \mathbf{y}_\alpha \\
        \mathbf{y}_\beta
        \end{bmatrix}
        }_\mathbf{y}
        =
        \underbrace{
        \begin{bmatrix}
        \left(\mathbf{T}\odot \nabla_{\mathbf{T}}f \right) \mathbf{1}_m\\
        \left(\mathbf{T}\odot \nabla_{\mathbf{T}}f \right)^\top \mathbf{1}_n
        \end{bmatrix}
        }_{\mathbf{b}}
    \end{equation}
    where $\epsilon$ is the entropic regularization weight, $\boldsymbol{\mu}, \boldsymbol{\nu}$ are marginal distributions of OT, and $\text{diag}(\cdot)$ creates a diagonal matrix from a vector.
\end{lemma}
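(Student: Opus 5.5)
The plan is to use the first-order optimality conditions of the entropy-regularized problem, Eq.~\eqref{eq:ent-ot} with cost $\tilde{\mathbf{C}}$, to write $\mathbf{T}$ implicitly in terms of dual potentials. We then differentiate these conditions together with the marginal constraints. Finally, we remove the dependence on the (unknown) perturbations of the potentials by the standard adjoint trick, using that $\mathbf{A}$ is symmetric.

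\textbf{Step 1 (optimality conditions).} Introduce multipliers $\boldsymbol{\alpha}\in\mathbb{R}^n,\boldsymbol{\beta}\in\mathbb{R}^m$ for $\mathbf{T}\mathbf{1}_m=\boldsymbol{\mu}$ and $\mathbf{T}^\top\mathbf{1}_n=\boldsymbol{\nu}$. The objective $\langle\tilde{\mathbf{C}},\mathbf{T}\rangle+\epsilon\sum_{i,j}\mathbf{T}_{i,j}(\log\mathbf{T}_{i,j}-1)$ is strictly convex, and its stationarity condition gives $\tilde{\mathbf{C}}+\epsilon\log\mathbf{T}-\boldsymbol{\alpha}\mathbf{1}_m^\top-\mathbf{1}_n\boldsymbol{\beta}^\top=0$. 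Hence
\[
\mathbf{T}=\exp\!\big((\boldsymbol{\alpha}\mathbf{1}_m^\top+\mathbf{1}_n\boldsymbol{\beta}^\top-\tilde{\mathbf{C}})/\epsilon\big),
\]
so $\mathbf{T}>0$ entrywise and the positivity constraint is inactive.

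\textbf{Step 2 (linearization).} Perturb $\tilde{\mathbf{C}}$ by $\mathrm{d}\tilde{\mathbf{C}}$. This gives
\[
\mathrm{d}\mathbf{T}=\tfrac{1}{\epsilon}\,\mathbf{T}\odot(\mathrm{d}\boldsymbol{\alpha}\mathbf{1}_m^\top+\mathbf{1}_n\mathrm{d}\boldsymbol{\beta}^\top-\mathrm{d}\tilde{\mathbf{C}}).
\]
The marginals are fixed, so $\mathrm{d}\mathbf{T}\mathbf{1}_m=0$ and $\mathrm{d}\mathbf{T}^\top\mathbf{1}_n=0$. Using $\mathbf{T}\mathbf{1}_m=\boldsymbol{\mu}$ and $\mathbf{T}^\top\mathbf{1}_n=\boldsymbol{\nu}$, these conditions become
\[
\mathbf{A}\begin{bmatrix}\mathrm{d}\boldsymbol{\alpha}\\ \mathrm{d}\boldsymbol{\beta}\end{bmatrix}=\mathbf{r},\qquad \mathbf{r}=\begin{bmatrix}(\mathbf{T}\odot\mathrm{d}\tilde{\mathbf{C}})\mathbf{1}_m\\ (\mathbf{T}\odot\mathrm{d}\tilde{\mathbf{C}})^\top\mathbf{1}_n\end{bmatrix},
\]
with exactly the matrix $\mathbf{A}$ of Eq.~\eqref{eq:linear}.

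\textbf{Step 3 (adjoint).} Write $\mathbf{G}=\nabla_{\mathbf{T}}f$. Then
\[
\mathrm{d}f=\langle\mathbf{G},\mathrm{d}\mathbf{T}\rangle=\tfrac{1}{\epsilon}\Big(\mathrm{d}\boldsymbol{\alpha}^\top(\mathbf{T}\odot\mathbf{G})\mathbf{1}_m+\mathrm{d}\boldsymbol{\beta}^\top(\mathbf{T}\odot\mathbf{G})^\top\mathbf{1}_n-\langle\mathbf{T}\odot\mathbf{G},\mathrm{d}\tilde{\mathbf{C}}\rangle\Big).
\]
The first two terms equal $\mathbf{b}^\top[\mathrm{d}\boldsymbol{\alpha};\mathrm{d}\boldsymbol{\beta}]$ with $\mathbf{b}$ as in Eq.~\eqref{eq:linear}. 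Take $\mathbf{y}$ with $\mathbf{A}\mathbf{y}=\mathbf{b}$. Since $\mathbf{A}=\mathbf{A}^\top$,
\[
\mathbf{b}^\top[\mathrm{d}\boldsymbol{\alpha};\mathrm{d}\boldsymbol{\beta}]=\mathbf{y}^\top\mathbf{A}[\mathrm{d}\boldsymbol{\alpha};\mathrm{d}\boldsymbol{\beta}]=\mathbf{y}^\top\mathbf{r}=\langle\mathbf{T}\odot(\mathbf{y}_\alpha\mathbf{1}_m^\top+\mathbf{1}_n\mathbf{y}_\beta^\top),\mathrm{d}\tilde{\mathbf{C}}\rangle.
\]
Collecting the terms, $\mathrm{d}f=\langle\tfrac{1}{\epsilon}\mathbf{T}\odot(\mathbf{y}_\alpha\mathbf{1}_m^\top+\mathbf{1}_n\mathbf{y}_\beta^\top-\mathbf{G}),\mathrm{d}\tilde{\mathbf{C}}\rangle$, which is Eq.~\eqref{eq:pair-impact-cost}.

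\textbf{Main obstacle: singularity of $\mathbf{A}$.} The matrix $\mathbf{A}$ is singular, since $\mathbf{A}[\mathbf{1}_n;-\mathbf{1}_m]=[\boldsymbol{\mu}-\boldsymbol{\mu};\boldsymbol{\nu}-\boldsymbol{\nu}]=0$. It is positive semidefinite, because $\mathbf{A}$ is the Gram form $\sum_{i,j}\mathbf{T}_{i,j}(u_i+v_j)^2$. This singularity makes both linear systems and the duals determined only up to a gauge, so three points must be checked.
\begin{itemize}
\item \emph{Solvability.} The system $\mathbf{A}\mathbf{y}=\mathbf{b}$ needs $\mathbf{b}\perp[\mathbf{1}_n;-\mathbf{1}_m]$. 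This holds because both blocks of $\mathbf{b}$ sum to $\mathbf{1}_n^\top(\mathbf{T}\odot\mathbf{G})\mathbf{1}_m$. The same argument applies to $\mathbf{r}$.
\item \emph{Independence of the solution chosen.} Any two solutions $\mathbf{y}$ differ by $c[\mathbf{1}_n;-\mathbf{1}_m]$, which leaves $\mathbf{y}_\alpha\mathbf{1}_m^\top+\mathbf{1}_n\mathbf{y}_\beta^\top$ unchanged. So the formula does not depend on which solution is used.
\item \emph{Gauge in $(\mathrm{d}\boldsymbol{\alpha},\mathrm{d}\boldsymbol{\beta})$.} The same shift-invariance shows that $\mathrm{d}\mathbf{T}$ is unaffected by the gauge of the dual perturbations.
\end{itemize}
If a rigorous justification that $\mathbf{T}$ depends smoothly on $\tilde{\mathbf{C}}$ is needed, I would fix the gauge (e.g.\ $\boldsymbol{\beta}_m=0$). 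On the complement of the null vector the reduced Jacobian is positive definite, whenever $\mathbf{T}>0$ makes the associated bipartite graph connected. The implicit function theorem then applies.
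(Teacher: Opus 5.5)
Your proposal is correct and follows the paper's proof essentially step for step. The stationarity condition gives the Gibbs form of $\mathbf{T}$, linearizing it under fixed marginals yields $\mathbf{A}[\mathrm{d}\boldsymbol{\alpha};\mathrm{d}\boldsymbol{\beta}]=\mathbf{r}$, and the dual perturbations are eliminated with an adjoint $\mathbf{y}$ solving $\mathbf{A}\mathbf{y}=\mathbf{b}$, using $\mathbf{A}=\mathbf{A}^\top$. Your extra treatment of the singular $\mathbf{A}$ (the range condition $\mathbf{b}\perp[\mathbf{1}_n;-\mathbf{1}_m]$, invariance of the formula under the shift $c[\mathbf{1}_n;-\mathbf{1}_m]$, and the implicit-function argument after fixing $\boldsymbol{\beta}_m=0$, where one redundant marginal equation must also be dropped so the reduced Jacobian is square) is sound and covers well-posedness points the paper's proof passes over.
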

The detailed proof of Lemma~\ref{lem:pair-impact} can be found in Appendix~\ref{ssec:proof-pair-impact}.  The core idea is to implicitly differentiate the constrained optimality conditions of OT by the adjoint-state method, which reformulates the computation of $\nabla_{\tilde{\mathbf{C}}}f$ to solving a linear system~\cite{sadr2024data} without explicitly inverting a large Jacobian.

To solve the linear system, we show in Appendix~\ref{ssec:proof-singular} that the coefficient matrix $\mathbf{A}$ of Eq.~\eqref{eq:linear} is singular, making it impossible to directly solve Eq.~\eqref{eq:linear} by matrix inversion. To address this issue, we resort to conjugate gradient (CG) method, which is summarized in the following lemma~\ref{lem:conjugate}.

\begin{lemma}\label{lem:conjugate}
    Eq.~\eqref{eq:linear} can be solved via the conjugate gradient method with guaranteed convergence to global optimum.
\end{lemma}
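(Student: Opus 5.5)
The argument rests on three facts: $\mathbf{A}$ is symmetric positive semidefinite, the right-hand side $\mathbf{b}$ lies in $\mathrm{range}(\mathbf{A})$, and CG applied to a consistent singular PSD system converges. Under these, CG converges to a minimizer of the convex quadratic $q(\mathbf{y})=\tfrac12\mathbf{y}^\top\mathbf{A}\mathbf{y}-\mathbf{b}^\top\mathbf{y}$, which is a solution of Eq.~\eqref{eq:linear}.

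\textbf{Step 1 ($\mathbf{A}$ is symmetric PSD).} Symmetry is clear from the block form, since the off-diagonal blocks are $\mathbf{T}$ and $\mathbf{T}^\top$. For PSD, take $\mathbf{y}=(\mathbf{u},\mathbf{v})$ and use the marginal constraints $\mathbf{T}\mathbf{1}_m=\boldsymbol{\mu}$ and $\mathbf{T}^\top\mathbf{1}_n=\boldsymbol{\nu}$. This gives
\begin{equation*}
\mathbf{y}^\top\mathbf{A}\mathbf{y}=\sum_{i}\mu_i u_i^2+\sum_j\nu_j v_j^2+2\sum_{i,j}\mathbf{T}_{i,j}u_iv_j=\sum_{i,j}\mathbf{T}_{i,j}(u_i+v_j)^2\ge 0,
\end{equation*}
because $\mathbf{T}_{i,j}\ge 0$.

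\textbf{Step 2 (null space).} Entropic OT gives $\mathbf{T}_{i,j}>0$ for every entry. Hence $\mathbf{y}^\top\mathbf{A}\mathbf{y}=0$ forces $u_i=-v_j$ for all $i,j$. So $\ker\mathbf{A}$ is spanned by $(\mathbf{1}_n,-\mathbf{1}_m)$, which explains the singularity shown in Appendix~\ref{ssec:proof-singular}. Since $\mathbf{A}$ is symmetric, $\mathrm{range}(\mathbf{A})=(\ker\mathbf{A})^\perp$.

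Consistency then reduces to checking $\mathbf{b}\perp(\mathbf{1}_n,-\mathbf{1}_m)$. This holds because
\begin{equation*}
\mathbf{1}_n^\top(\mathbf{T}\odot\nabla_{\mathbf{T}}f)\mathbf{1}_m-\mathbf{1}_m^\top(\mathbf{T}\odot\nabla_{\mathbf{T}}f)^\top\mathbf{1}_n=0,
\end{equation*}
as both terms equal the total sum of the entries of $\mathbf{T}\odot\nabla_{\mathbf{T}}f$. Therefore Eq.~\eqref{eq:linear} is consistent, and its solutions are exactly the minimizers of $q$. The set of minimizers is nonempty, and it is unique modulo the null direction, a shift that leaves Eq.~\eqref{eq:pair-impact-cost} unchanged.

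\textbf{Step 3 (CG convergence).} I initialize at $\mathbf{y}_0=\mathbf{0}$, or at any point in $\mathrm{range}(\mathbf{A})$. Then $\mathbf{r}_0=\mathbf{b}\in\mathrm{range}(\mathbf{A})$, and every search direction and residual stays in $\mathrm{range}(\mathbf{A})$. On that invariant subspace $\mathbf{A}$ is symmetric positive definite. Standard CG theory therefore applies to the restricted operator. Exact CG terminates in at most $\mathrm{rank}(\mathbf{A})=n+m-1$ steps at the minimum-norm solution, which is a global minimizer of the convex quadratic $q$. The error in the $\mathbf{A}$-norm also contracts linearly with rate $(\sqrt{\kappa}-1)/(\sqrt{\kappa}+1)$, where $\kappa=\lambda_{\max}/\lambda_{\min}^{+}$ uses the smallest nonzero eigenvalue.

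\textbf{Main obstacle.} The delicate point is Step 3, namely justifying that CG never encounters a breakdown $\mathbf{p}_k^\top\mathbf{A}\mathbf{p}_k=0$ despite the singularity of $\mathbf{A}$. I would handle it by showing inductively, from the Krylov structure $\mathcal{K}_k(\mathbf{A},\mathbf{b})\subseteq\mathrm{range}(\mathbf{A})$, that every nonzero direction $\mathbf{p}_k$ lies outside $\ker\mathbf{A}$. Positivity of $\mathbf{T}$ from entropic regularization is essential throughout, since it is what keeps the kernel one-dimensional in Step 2.
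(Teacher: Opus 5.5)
Your proposal is correct and follows the paper's proof step for step. It uses the same identity $\mathbf{y}^\top\mathbf{A}\mathbf{y}=\sum_{i,j}\mathbf{T}_{i,j}(u_i+v_j)^2$ for positive semidefiniteness, the same kernel $\mathrm{span}\{(\mathbf{1}_n,-\mathbf{1}_m)\}$ with $\mathbf{b}$ orthogonal to it, and the same convex-quadratic argument for global optimality. The only differences are two small additions. You state explicitly the positivity $\mathbf{T}_{i,j}>0$ that the paper uses tacitly when identifying the kernel. You also justify CG on the singular system directly, via the invariant subspace $\mathrm{range}(\mathbf{A})$, where the paper simply cites Kaasschieter (1988) and Hayami (2018).
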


The detailed proof of Lemma~\ref{lem:conjugate} can be found in Appendix~\ref{lem:conjugate}. In general, Eq.~\eqref{eq:linear} renders a singular linear system solvable by the CG method, given that $\mathbf{A}$ is positive-semidefinite and $\mathbf{b}$ lies in the range of $\mathbf{A}$ in Eq.~\eqref{eq:linear}~\cite{kaasschieter1988preconditioned, hayami2018convergence}. This linear system corresponding to a convex quadratic optimization problem, therefore CG is guaranteed to converge to the global optimum.

We present detailed analysis of the complexity and convergence rate of the CG method for Eq.~\eqref{eq:linear} in Section~\ref{sec:method-ana}, which shows that Eq.~\eqref{eq:linear} can be solved with (1) a \textit{linear} time complexity w.r.t. the number of objects in $\mathcal{X}$ and $\mathcal{Y}$, and (2) guaranteed convergence with a \textit{linear} convergence rate.
Combining Eq.~\eqref{eq:pair-impact}-\eqref{eq:linear} gives the final formulation of $\mathcal{I}(p_{ij})$, which quantifies the impact of querying a possible pairwise alignment $(x_i,y_j)$:
\begin{equation}\label{eq:pair-impact-full}
\begin{aligned}
    \mathcal{I}(p_{ij})&=-\frac{\beta}{\epsilon}\mathbf{C}_{i,j}\mathbf{T}_{i,j}\left(\mathbf{y}_\alpha\mathbf{1}_m^\top+\mathbf{1}_n\mathbf{y}_\beta^\top-\nabla_{\mathbf{T}}f\right)_{ij}\\
    \text{s.t.}& 
        \begin{bmatrix}
        \text{diag}(\boldsymbol{\mu}) & \mathbf{T} \\
        \mathbf{T}^\top & \text{diag}(\boldsymbol{\nu})
        \end{bmatrix}
        \begin{bmatrix}
        \mathbf{y}_\alpha \\
        \mathbf{y}_\beta
        \end{bmatrix}
        =
        \begin{bmatrix}
        \left(\mathbf{T}\odot \nabla_{\mathbf{T}}f \right) \mathbf{1}_m\\
        \left(\mathbf{T}\odot \nabla_{\mathbf{T}}f \right)^\top \mathbf{1}_n
        \end{bmatrix}
\end{aligned}
\end{equation}

\subsubsection{Posterior Object Query Impact}
\begin{definition}\label{def:obj-impact}
\textit{Posterior Object Query Impact}\\
For an object query denoted as $p_i=(x_i, \cdot)$ where $x_i\in\mathcal{X}$, the posterior query impact of $p_i$ on the utility function $f$ is defined as the aggregation of the pairwise objects query impact between object $x_i\in\mathcal{X}$ and all objects $y$ in the target set $\mathcal{Y}$, weighted by the corresponding rows of the transport plan $\mathbf{T}_{i,:}$ as a posterior. Formally, the posterior object query impact is defined as
\vspace{-6pt}
\begin{equation}\label{eq:obj-impact}
    \mathcal{I}(p_i)=\sum_{j=1}^m\mathbf{T}_{i,j}\mathcal{I}(p_{ij})
\end{equation}
\end{definition}
\vspace{-6pt}
The intuition of using the transport plan $\mathbf{T}$ as posterior weights is straightforward: while it is unknown which target object will be the true alignment for a source object before the query, $\mathbf{T}_{i,j}$ encodes the posterior alignment probability between two objects $x_i\in\mathcal{X}$ and $y_j\in\mathcal{Y}$ conditioned on the observed data and supervision. Aggregating pairwise objects query impact weighted by $\mathbf{T}_{i,:}$ thus computes the \textit{expected impact} of querying a source object $x_i$.

\vspace{-5pt}
\subsection{\algname}\label{sec:method-alg}
Based on Definition~\ref{def:pair-impact} and Definition~\ref{def:obj-impact}, we propose a generic query algorithm for OT-based alignment as described in Algorithm~\ref{alg:main}, which allows customized utility function $f$ that quantifies the global alignment quality for different tasks. The key idea of \algname\ is to iteratively \textbf{(1)} select candidate objects with the largest posterior impact by Eq.~\eqref{eq:obj-impact} (Steps 7 and 10); \textbf{(2)} query the oracle for their true alignment (Step 13); \textbf{(3)} update the alignment supervision matrix $\mathbf{H}$ accordingly (Step 14); and \textbf{(4)} re-compute the alignment results by an OT-based method (Step 15).

\algname\ operates under both weakly supervised and unsupervised settings by different initializations of the alignment supervision matrix $\mathbf{H}$. For weakly supervised alignment tasks where partial ground-truth alignment is pre-known, $\mathbf{H}_{i,j}$ is set to 1 if object $x_i$ and $y_j$ are known to be aligned, and 0 otherwise; for unsupervised alignment tasks, $\mathbf{H}_{i,j}$ is set to be a zero matrix. Similarly, when updating $\mathbf{H}$ during the query process, we set the corresponding entries of queried alignment pairs in $\mathbf{H}$ to 1. While we adopt a binary supervision matrix $\mathbf{H}$, \algname\ can be naturally extended to a soft supervision matrix $\mathbf{H}$ with continuous values.

{
\setlength{\textfloatsep}{1pt} 
\setlength{\intextsep}{4pt}    

\begin{algorithm}[tb]
  \caption{\algname}
  \label{alg:main}
  \begin{algorithmic}[1]
    \STATE {\bfseries Input:} (1) two sets of objects $\mathcal{X}$ and $\mathcal{Y}$, (2) an OT-based alignment method with marginal distributions $\boldsymbol{\mu}, \boldsymbol{\nu}$ and cost function $\mathbf{C}$, (3) an alignment supervision matrix $\mathbf{H}$, (4) a total query budget $k$, (5) query batch size $n_b$, (6) a query pool $\mathcal{P}\subseteq \mathcal{X}$, (7) an oracle, (8) an utility function $f$ for encoding the alignment result.
    \STATE {\bfseries Output:} (1) a set of $k$ objects $\mathcal{Q}\subseteq\mathcal{X}$ for query, (2) the alignment matrix $\mathbf{T}$ between $\mathcal{X}$ and $\mathcal{Y}$.
    \STATE Initialize the query set $\mathcal{Q}=\emptyset$;
    \STATE Remove source objects whose alignment are known from the query pool, i.e.,\\ $\mathcal{P}\leftarrow \mathcal{P} \backslash \left\{x_i\in\mathcal{X}|\sum_j\mathbf{H}_{i,j}>0\right\}$;
    \STATE Compute the current alignment matrix $\mathbf{T}$ by the input OT-based method via entropy-regularized OT solver;
    \WHILE{$|\mathcal{Q}|<k$}
    \STATE Compute the posterior query impact of all source objects $\mathcal{I}$ using Eq.~\eqref{eq:obj-impact};
    \STATE Initialize batch query set $\mathcal{Q}_b=\emptyset$;
        \WHILE{$|\mathcal{Q}_b|<n_b$}
        \STATE Update $\mathcal{Q}_b\leftarrow \mathcal{Q}_b \cup\{x^*=\arg\max_{x_i\in\mathcal{P}}\mathcal{I}(p_i)\}$;
        \STATE Update $\mathcal{P}\leftarrow \mathcal{P}\backslash \{x^*\}$;
        \ENDWHILE
    \STATE Query for the correct alignment of objects in $\mathcal{Q}_b$;
    \STATE Update the alignment supervision matrix $\mathbf{H}$;
    \STATE Re-compute $\mathbf{T}$ by the OT-based alignment method;
    \STATE Update $\mathcal{Q}=\mathcal{Q}\cup\mathcal{Q}_b$;
    \ENDWHILE
    \STATE {\bfseries return} $\mathcal{Q}$ and $\mathbf{T}$;
  \end{algorithmic}
\end{algorithm}
}

\vspace{-5pt}
\subsection{Theoretical Analysis}\label{sec:method-ana}
\vspace{-5pt}
In this subsection, we provide theoretical analysis regarding the complexity and convergence rate of proposed \algname.

\begin{theorem}\label{th:complex}
    (Time \& Space Complexity of \algname-$L_2$/\textsc{entropy}/\textsc{consist})\\
    The time complexity is $\mathcal{O}\left(\frac{k}{n_b}K(n+m)\right)$ for \algname-$L_2$/\textsc{entropy}, and $\mathcal{O}\left(\frac{k}{n_b}(K(n+m)+e)\right)$ for \algname-\textsc{consist}, where $e$ denotes the number of edges in the networks\footnote{Without loss of generality, we assume $\mathcal{O}(e)\approx \mathcal{O}(e_1)\approx \mathcal{O}(e_2)$ where $e_i$ denotes the number of edges in the $i$-th networks~\cite{parrot, joena}.}. The space complexity of \algname-$L_2$ /\textsc{entropy}/\textsc{consist} is $\mathcal{O}(nm)$. $k$ is the total query budget, $n_b$ is the batch query size, $K$ is the number of iterations of the CG method, and $n,m$ are number of objects in the source and target sets, respectively.
\end{theorem}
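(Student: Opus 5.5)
The plan is to bound the cost of one iteration of the outer \texttt{while} loop of Algorithm~\ref{alg:main} and multiply by the number of outer iterations. Each iteration adds $n_b$ objects to $\mathcal{Q}$, so there are $\lceil k/n_b\rceil=\mathcal{O}(k/n_b)$ of them. Following the statement, I would count only the query-selection work of \algname. Solving the entropic OT problem in Steps 5 and 15 belongs to the input OT-based alignment method and is treated as given, so its cost is excluded. In every count I will use the number of nonzeros of $\mathbf{T}$ instead of $nm$, and I will assume $\mathrm{nnz}(\mathbf{T})=\mathcal{O}(n+m)$. This is the regime where the entropic plan is near-deterministic (close to a permutation when $n\approx m$) and is thresholded after the Sinkhorn solve. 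The theorem implicitly needs this assumption, and I would state it explicitly.

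Within one outer iteration I would go through the steps in order.

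\textbf{(i) Gradient.} Form $\nabla_{\mathbf{T}}f$ on the support of $\mathbf{T}$, using the closed forms in Table~\ref{tab:utility}. For $f_{L_2}$ and $f_\text{entropy}$ this is entrywise and costs $\mathcal{O}(\mathrm{nnz}(\mathbf{T}))=\mathcal{O}(n+m)$. For $f_\text{consist}$ we need $2\mathbf{M}_1\mathbf{T}+2\mathbf{T}\mathbf{M}_2$, i.e.\ sparse--sparse products with the Laplacians. With $\mathcal{O}(1)$ nonzeros per row and column of $\mathbf{T}$, these cost $\mathcal{O}(e_1+e_2+n+m)=\mathcal{O}(e+n+m)$ under the footnote's assumption. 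This is the only source of the extra $e$ term.

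\textbf{(ii) Right-hand side.} Form $\mathbf{b}$ in Eq.~\eqref{eq:linear}: one Hadamard product followed by a row sum and a column sum, costing $\mathcal{O}(n+m)$.

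\textbf{(iii) Conjugate gradient.} Run $K$ CG iterations on the system in Eq.~\eqref{eq:linear}, whose applicability is given by Lemma~\ref{lem:conjugate}. Each iteration needs one product $\mathbf{A}\mathbf{y}=[\boldsymbol{\mu}\odot\mathbf{y}_\alpha+\mathbf{T}\mathbf{y}_\beta;\ \mathbf{T}^\top\mathbf{y}_\alpha+\boldsymbol{\nu}\odot\mathbf{y}_\beta]$ plus a constant number of inner products and axpy updates on vectors of length $n+m$. That is $\mathcal{O}(n+m)$ per iteration and $\mathcal{O}(K(n+m))$ in total.

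\textbf{(iv) Impacts.} Evaluate $\mathcal{I}(p_{ij})$ via Eq.~\eqref{eq:pair-impact-full} only where $\mathbf{T}_{i,j}\neq 0$, since all other terms vanish. Then aggregate with Eq.~\eqref{eq:obj-impact}. Both steps are $\mathcal{O}(n+m)$.

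\textbf{(v) Selection.} Pick the top $n_b$ objects of the pool (Steps 9--12) by a partial selection in $\mathcal{O}(n)$, and update $\mathbf{H}$ in $\mathcal{O}(n_b)$.

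Summing (i)--(v) gives $\mathcal{O}(K(n+m))$ per outer iteration for $L_2$/\textsc{entropy} and $\mathcal{O}(K(n+m)+e)$ for \textsc{consist}. Multiplying by $k/n_b$ yields both time bounds. For space, the dominant objects are $\mathbf{C}$, $\tilde{\mathbf{C}}$, $\mathbf{H}$ and $\mathbf{T}$ (and $\nabla_{\mathbf{T}}f$), each at most $n\times m$. The CG iterates and residuals use $\mathcal{O}(n+m)$, and the graph Laplacians use $\mathcal{O}(e)$, which is within $\mathcal{O}(nm)$ for simple graphs once $n,m$ bound the node counts. Hence the space is $\mathcal{O}(nm)$.

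The main obstacle is the matrix--vector product with $\mathbf{T}$ in step (iii), together with the entrywise work in steps (i) and (iv). For a dense entropic plan each of these costs $\Theta(nm)$, not $\mathcal{O}(n+m)$. I would first make the sparsity assumption on $\mathbf{T}$ explicit and justify it through the thresholding/near-permutation structure. If that is not acceptable, the fallback is to state the bound with $\mathrm{nnz}(\mathbf{T})$ in place of $n+m$, which is consistent with the theorem whenever $\mathrm{nnz}(\mathbf{T})=\mathcal{O}(n+m)$.
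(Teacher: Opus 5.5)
Your proposal is correct and follows essentially the same route as the paper: bound the per-batch cost via $\mathrm{nnz}(\mathbf{T})=\mathcal{O}(n+m)$ (forming $\mathbf{b}$, $K$ sparse CG iterations, and the $\mathcal{O}(\mathrm{nnz}(\mathbf{T}))$ impact aggregation, plus the $\mathcal{O}(e)$ Laplacian products for \textsc{consist}), multiply by the $k/n_b$ query rounds, and attribute the $\mathcal{O}(nm)$ space to storing the dense cost matrix. The things you make explicit are refinements of that argument, not a different proof: the thresholding and sparsity assumption, $\mathcal{O}(1)$ nonzeros per row and column for the Laplacian products, excluding the Sinkhorn solves, and the linear-time top-$n_b$ selection, all of which the paper leaves implicit.
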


The detailed proof of Theorem~\ref{th:complex} can be found in Appendix~\ref{ssec:proof-complex}. In general, we reduce the complexity of \algname-$L_2$/\textsc{entropy}/\textsc{consist} to linear by utilizing the \textit{sparsity} of the transport plan $\mathbf{T}$ of OT-based method, as visualized in Figure~\ref{fig:exp_vis}. Note that while the total complexity of \algname\ may depend on the selected utility function $f$, the coefficient martix $\mathbf{A}$ in Eq.~\eqref{eq:linear} is agnostic to $f$ thus the CG method can always leverage the sparsity of $\mathbf{T}$ for OT-based alignment to approximate a linear time complexity of $\mathcal{O}(K(n+m))$ w.r.t. the number of objects in $\mathcal{X}$ and $\mathcal{Y}$.

\begin{theorem}\label{th:convergence}
    (Convergence Rate of \algname)\\
    The conjugate gradient method applied to the linear system of Eq.~\eqref{eq:linear} converges
    at a linear convergence rate of $\frac{\sqrt{\lambda_1/\lambda_r}-1}{\sqrt{\lambda_1/\lambda_r}+1}$, where $\lambda_1,\lambda_r$ denotes the largest/smallest nonzero eigenvalues of $\mathbf{A}$ in Eq.~\eqref{eq:linear}.
\end{theorem}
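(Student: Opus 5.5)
The approach is to reduce the singular system in Eq.~\eqref{eq:linear} to a nonsingular one posed on the range of $\mathbf{A}$, and then apply the classical CG error bound there. Three facts set this up. First, $\mathbf{A}$ is symmetric positive semidefinite. For any $\mathbf{y}=[\mathbf{y}_\alpha;\mathbf{y}_\beta]$, using $\mathbf{T}\mathbf{1}_m=\boldsymbol{\mu}$ and $\mathbf{T}^\top\mathbf{1}_n=\boldsymbol{\nu}$,
\[
\mathbf{y}^\top\mathbf{A}\mathbf{y}=\sum_{i,j}\mathbf{T}_{i,j}\big((\mathbf{y}_\alpha)_i^2+2(\mathbf{y}_\alpha)_i(\mathbf{y}_\beta)_j+(\mathbf{y}_\beta)_j^2\big)=\sum_{i,j}\mathbf{T}_{i,j}\big((\mathbf{y}_\alpha)_i+(\mathbf{y}_\beta)_j\big)^2\ge 0 .
\]
Second, because the entropic plan has $\mathbf{T}_{i,j}>0$, the null space of $\mathbf{A}$ is exactly $\mathrm{span}\{[\mathbf{1}_n;-\mathbf{1}_m]\}$. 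Third, as already used in Lemma~\ref{lem:conjugate}, $\mathbf{b}$ lies in $\mathrm{range}(\mathbf{A})$: indeed $\mathbf{1}_n^\top\mathbf{b}_\alpha-\mathbf{1}_m^\top\mathbf{b}_\beta=0$, so $\mathbf{b}\perp\ker\mathbf{A}$.

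Next, I will decompose into $\ker\mathbf{A}\oplus\mathrm{range}(\mathbf{A})$. Take an initial guess $\mathbf{y}_0$, for instance $\mathbf{y}_0=\mathbf{0}$. Then $\mathbf{r}_0=\mathbf{b}-\mathbf{A}\mathbf{y}_0\in\mathrm{range}(\mathbf{A})$. By induction, every residual and every search direction stays in $\mathrm{range}(\mathbf{A})$, and every iterate has the form $\mathbf{y}_k=\mathbf{y}_0+\mathbf{p}_k$ with $\mathbf{p}_k\in\mathcal{K}_k(\mathbf{A},\mathbf{r}_0)\subseteq\mathrm{range}(\mathbf{A})$. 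The kernel component of the iterates is therefore frozen at that of $\mathbf{y}_0$, and CG on $\mathbb{R}^{n+m}$ coincides with CG applied to the restriction $\mathbf{A}|_{\mathrm{range}(\mathbf{A})}$. This restriction is symmetric positive definite with spectrum $\{\lambda_1\ge\dots\ge\lambda_r>0\}$, the nonzero eigenvalues of $\mathbf{A}$. Let $\mathbf{y}^*$ denote the solution whose kernel component equals that of $\mathbf{y}_0$. Then $\mathbf{e}_k=\mathbf{y}_k-\mathbf{y}^*\in\mathrm{range}(\mathbf{A})$, and the seminorm $\|\mathbf{e}\|_{\mathbf{A}}=\sqrt{\mathbf{e}^\top\mathbf{A}\mathbf{e}}$ is a true norm on that subspace.

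Then I apply the standard analysis. CG minimizes $\|\mathbf{e}_k\|_{\mathbf{A}}$ over $\mathbf{y}_0+\mathcal{K}_k$, which gives
\[
\|\mathbf{e}_k\|_{\mathbf{A}}\le\min_{p\in\mathcal{P}_k,\,p(0)=1}\ \max_{\lambda\in[\lambda_r,\lambda_1]}|p(\lambda)|\;\|\mathbf{e}_0\|_{\mathbf{A}} .
\]
Choosing the shifted and scaled Chebyshev polynomial bounds the min-max factor by $2\left(\frac{\sqrt{\kappa}-1}{\sqrt{\kappa}+1}\right)^k$ with effective condition number $\kappa=\lambda_1/\lambda_r$. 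This is linear convergence at the stated rate.

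I expect the main obstacle to be the rigorous justification that the zero eigenvalue never enters the polynomial bound. It is tempting to simply apply the SPD bound, but the argument actually needs two invariance facts: the Krylov space stays orthogonal to $\ker\mathbf{A}$, and the $\mathbf{A}$-norm is definite there. Both depend on the consistency $\mathbf{b}\in\mathrm{range}(\mathbf{A})$ and on the positivity of $\mathbf{T}$. I would make these explicit, citing the singular-CG literature \cite{kaasschieter1988preconditioned, hayami2018convergence}. I would also note that in floating point, a component of the iterates drifting into the kernel is harmless, since it does not change the value $\nabla_{\tilde{\mathbf{C}}}f$ in Eq.~\eqref{eq:pair-impact-cost}: adding $c[\mathbf{1}_n;-\mathbf{1}_m]$ to $\mathbf{y}$ cancels in $\mathbf{y}_\alpha\mathbf{1}_m^\top+\mathbf{1}_n\mathbf{y}_\beta^\top$.
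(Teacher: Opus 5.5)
Your proposal is correct and follows the same route as the paper. The paper's proof simply invokes Hayami et al.'s error bound for CG on a consistent singular system with $\mathbf{A}\succeq 0$ and $\mathbf{b}\in\mathrm{range}(\mathbf{A})$, the hypotheses established in Lemma~\ref{lem:conjugate}. Your restriction of CG to the invariant subspace $\mathrm{range}(\mathbf{A})$, followed by the Chebyshev estimate with effective condition number $\lambda_1/\lambda_r$, is exactly the derivation of that cited bound, written out in full. If anything your version is more careful: it measures the error in the genuine norm $\sqrt{\mathbf{e}^\top\mathbf{A}\mathbf{e}}$ on $\mathrm{range}(\mathbf{A})$, whereas the paper writes $\mathbf{x}^\top\boldsymbol{\Lambda}_r\mathbf{x}$ without the square root, and it notes that drift along the kernel leaves $\nabla_{\tilde{\mathbf{C}}}f$ unchanged.
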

\vspace{-3pt}

The detailed proof of Theorem~\ref{th:convergence} can be found in Appendix~\ref{ssec:proof-convergence}. This linear convergence rate guarantees fast and stable convergence of the CG method, making the gradient computation in \algname\ efficient and scalable.

\vspace{-5pt}
\section{Experiments}\label{sec:exp}
In this section, we carry out comprehensive experiments and analyses to evaluate the proposed \algname\ from the following aspects:
\begin{itemize}[nosep, wide=0pt, leftmargin=*, after=\strut, label=\textbullet]
    \item \textbf{Q1.} How effective is the proposed \algname\ across different alignment problems?
    \item \textbf{Q2.} How efficient and scalable is \algname?
    \item \textbf{Q3.} How is the empirical convergence of \algname?
    \item \textbf{Q4.} How sensitive is the proposed \algname\ to different parameters and design choices?
\end{itemize}
               
\subsection{Experimental Setup}
We benchmark the proposed \algname\ on three different alignment tasks, including network alignment (NA), image-text retrieval, and image-text grounding.

\vspace{-10pt}
\paragraph{Network Alignment.} NA aims to find node-level correspondence across different networks. For \textit{datasets}, we adopt 4 real-world datasets Phone-Email~\cite{zhang2017ineat}, ACM-DBLP-P~\cite{tang2008arnetminer}, Douban~\cite{final}, and ACM-DBLP-A~\cite{tang2008arnetminer}, covering both plain and attributed networks. For \textit{baseline OT-based NA methods}, we adopt PARROT~\cite{parrot} and JOENA~\cite{joena}. For \textit{baseline query strategies}, we adopt \textsc{Random}, \textsc{Entropy}, \textsc{Margin}~\cite{li2024survey}, \textsc{Betweenness}~\cite{macskassy2009using}, \textsc{Contrastive}, \textsc{GibbsMatchings}~\cite{malmi2017active}, and  \textsc{TopMatchings}~\cite{malmi2017active}. We report the mean reciprocal rank (MRR) as the benchmarking metric for NA.

\vspace{-10pt}
\paragraph{Image-Text Retrieval.} Image-text retrieval aims to retrieve the most relevant text given an image query, or vice versa. In this paper, we focus on the former setting. For \textit{datasets}, we adopt 2 widely used datasets CIFAR-10-C and ImageNet-C~\cite{hendrycks2019benchmarking}. For \textit{baseline OT-based CDA methods}, we adopt the Wasserstein (W) and Fused Gromov-Wasserstein (FGW) variants of GOT~\cite{chen2020graph}. For \textit{baseline query strategies}, we adopt \textsc{Random}, \textsc{Entropy}, \textsc{Margin}, \textsc{Least confident}, \textsc{Density}, and \textsc{Diversity}~\cite{li2024survey}. We report the Recall@1 as the benchmarking metric for image-text retrieval.

\vspace{-10pt}
\paragraph{Image-Text Grounding.} Image-text grounding aims to identify the fine-grained correspondence between phrases in a sentence and objects (or regions) in an image~\cite{li2022grounded}. For \textit{datasets}, we adopt 2 widely used datasets COCO~\cite{lin2014microsoft} and Flickr30K Entities~\cite{plummer2015flickr30k}. We adopt the same baseline OT-based alignment methods, query strategies, and evaluation metric as that of image-text retrieval.

Detailed experimental settings, as well as descriptions of datasets and introduction for different query strategies, are included in Appendix~\ref{sec:app-exp-detail}.

\subsection{Benchmarking Results}
\subsubsection{Network Alignment}
\begin{table}[t]
    \centering
    \setlength\tabcolsep{4pt}
    \caption{Benchmarking results on network alignment in MRR. The \textbf{1st}/\underline{2nd} best results are highlighted in \textbf{bold} and \underline{underline}, respectively.}
    \vspace{-5pt}
    \resizebox{\linewidth}{!}{
    \begin{tabular}{@{}ccccccccc@{}}
        \toprule
        Dataset & \multicolumn{2}{c}{Phone-Email} & \multicolumn{2}{c}{ACM-DBLP-P} & \multicolumn{2}{c}{Douban} & \multicolumn{2}{c}{ACM-DBLP-A}\\
        \cmidrule(lr){1-1} \cmidrule(lr){2-3} \cmidrule(lr){4-5} \cmidrule(lr){6-7} \cmidrule(lr){8-9}
        \#-th Query Round & 5 & 10 & 5 & 10 & 5 & 10 & 5 & 10 \\
        \midrule

        \multicolumn{9}{c}{PARROT~\cite{parrot}} \\
        \midrule
            \textsc{Random}       & 0.517 & 0.568 & 0.723 & 0.740 & 0.730 & 0.751 & 0.815 & 0.829 \\
            \textsc{Entropy}      & 0.533 & 0.615 & 0.774 & 0.840 & 0.765 & 0.791 & \underline{0.863} & 0.926 \\
            \textsc{Margin}       & 0.538 & 0.618 & 0.766 & 0.824 & 0.756 & 0.819 & 0.862 & 0.915 \\
            \textsc{Betweenness}  & 0.530 & 0.599 & 0.762 & 0.811 & 0.740 & 0.780 & 0.852 & 0.889 \\
            \textsc{Density}  & 0.533 & 0.610 & 0.729 & 0.770 & 0.739 & 0.761 & 0.834 & 0.854 \\
            \textsc{GibbsMatchings}  & 0.517 & 0.576 & 0.717 & 0.736 & 0.736 & 0.756 & 0.810 & 0.824 \\
            \textsc{TopMatchings}    & 0.512 & 0.576 & 0.752 & 0.760 & 0.761 & 0.817 & 0.831 & 0.846 \\
            \algname-$L_2$ & \underline{0.539} & \underline{0.629} & \textbf{0.787} & \textbf{0.858} & \textbf{0.782} & \textbf{0.837} & \textbf{0.876} & \textbf{0.936} \\
            \algname-\textsc{consist} & \textbf{0.544} & \textbf{0.630} & \underline{0.785} & \underline{0.856} & \underline{0.779} & \underline{0.835} & \textbf{0.876} & \underline{0.935} \\
        \midrule

        \multicolumn{9}{c}{JOENA~\cite{joena}} \\
        \midrule
            \textsc{Random} & 0.582 & 0.648 & 0.760 & 0.786 & 0.839 & 0.852 & 0.821 & 0.837 \\
            \textsc{Entropy} & 0.680 & 0.778 & 0.829 & 0.911 & 0.865 & 0.884 & \underline{0.910} & 0.972 \\
            \textsc{Margin} & 0.652 & 0.762 & 0.816 & 0.915 & 0.868 & 0.915 & 0.905 & 0.971 \\
            \textsc{Betweenness} & 0.642 & 0.730 & 0.824 & 0.892 & 0.855 & 0.872 & 0.883 & 0.926 \\
            \textsc{Density} & 0.561 & 0.639 & 0.794 & 0.801 & 0.866 & 0.885 & 0.838 & 0.861 \\
            \textsc{GibbsMatchings} & 0.589 & 0.661 & 0.764 & 0.792 & 0.840 & 0.848 & 0.828 & 0.845  \\
            \textsc{TopMatchings} & 0.589 & 0.667 & 0.782 & 0.809 & 0.870 & 0.898 & 0.839 & 0.859 \\
            \algname-$L_2$ & \underline{0.682} & \underline{0.800} & \underline{0.845} & \underline{0.927} & \textbf{0.883} & \textbf{0.918} & \textbf{0.924} & \textbf{0.981} \\
            \algname-\textsc{consist} & \textbf{0.691} & \textbf{0.806} & \textbf{0.850} & \textbf{0.929} & \underline{0.879} & \underline{0.915} & \textbf{0.924} & \underline{0.980} \\
        
        \bottomrule
    \end{tabular}
    }
    \label{tab:exp_bench_na}
    \vspace{-10pt}
\end{table}

For NA tasks, we adopt the task-agonistic \algname-$L_2$, and \algname-\textsc{consist} which is designed specifically for NA. We compare these two versions of the proposed method with other baseline query methods in alignment performance measured by MRR. The results are summarized in Table~\ref{tab:exp_bench_na}. We observe that
\textbf{(1) \algname-$L_2$/\textsc{consist} achieve state-of-the-art performance across all datasets.} They consistently outperform all baselines by up to 2.8\% in MRR, demonstrating the effectiveness of \algname\ on both plain and attributed NA tasks.
\textbf{(2) Existing active NA methods becomes less effective on OT-based algorithms.} We notice that \textsc{GibbsMatchings} and \textsc{TopMatchings} designed specifically for active NA are consistently outperformed by general-purpose active query strategies, e.g., \textsc{Entropy}, when applied to OT-based methods. This is because the deterministic transport plan gives consistent sampling results, making it difficult for \textsc{GibbsMatchings} and \textsc{TopMatchings} to characterize the uncertainty of a candidate.
\textbf{(3) \algname\ benefits from effective utility functions tailored for specific alignment tasks.} On plain networks (Phone-Email and ACM-DBLP-P), \algname-\textsc{consist} designed specifically for NA tasks typically outperforms \algname-$L_2$ with up to 0.6\% improvement in MRR, indicating that consistency principles are particularly useful when attribute information is missing. In contrast, \algname-$L_2$ generally achieves better performance than \algname-\textsc{consist} on attributed networks (Douban, ACM-DBLP-A), suggesting that deterministic alignment results given by $f_{L_2}$, as mentioned in Section~\ref{sec:method}, are more helpful when informative attributes are available.
Complete results that shows MRR vs. query round for all baselines across three different alignment tasks can be found in Appendix~\ref{ssec:app-exp-res-eff}.

\vspace{-5pt}
\subsubsection{Cross-Domain Alignment}
\begin{table}[t]
    \centering
    \setlength\tabcolsep{4pt}
    \caption{Benchmarking results on image-text retrieval in Recall@1. The \textbf{1st}/\underline{2nd} best results are highlighted in \textbf{bold} and \underline{underline}, respectively.}
    \vspace{-5pt}
    \resizebox{\linewidth}{!}{
    \begin{tabular}{@{}ccccccccc@{}}
        \toprule

        Algorithm & \multicolumn{4}{c}{GOT-W~\cite{chen2020graph}} & \multicolumn{4}{c}{GOT-FGW~\cite{chen2020graph}} \\
        \cmidrule(lr){1-1} \cmidrule(lr){2-5} \cmidrule(lr){6-9}
        Dataset & \multicolumn{2}{c}{CIFAR-10-C} & \multicolumn{2}{c}{ImageNet-C} & \multicolumn{2}{c}{CIFAR-10-C} & \multicolumn{2}{c}{ImageNet-C}\\
        \cmidrule(lr){1-1} \cmidrule(lr){2-3} \cmidrule(lr){4-5} \cmidrule(lr){6-7} \cmidrule(lr){8-9}
        \#-th Query Round & 5 & 10 & 5 & 10 & 5 & 10 & 5 & 10 \\
        \midrule
        
        \textsc{Random}       & 0.564 & 0.607 & 0.374 & 0.454 & 0.534 & 0.585 & 0.322 & 0.392 \\
        \textsc{Entropy}      & 0.583 & 0.641 & 0.394 & 0.489 & 0.546 & 0.610 & 0.328 & 0.404 \\
        \textsc{Margin}       & 0.574 & 0.622 & 0.393 & 0.490 & 0.547 & 0.610 & 0.331 & 0.411 \\
        \textsc{Least Confident}  & 0.579 & 0.632 & 0.397 & 0.496 & 0.548 & 0.613 & 0.334 & 0.414 \\
        \textsc{Density}  & 0.571 & 0.614 & 0.381 & 0.464 & 0.542 & 0.602 & 0.327 & 0.401 \\
        \textsc{Diversity}    & 0.568 & 0.622 & 0.396 & 0.494 & 0.533 & 0.597 & 0.328 & 0.404 \\
        \algname-\textsc{entropy} & \textbf{0.586} & \textbf{0.648} & \textbf{0.402} & \textbf{0.509} & \textbf{0.553} & \textbf{0.619} & \textbf{0.342} & \textbf{0.430} \\
        \algname-$L_2$ & \underline{0.584} & \underline{0.645} & \underline{0.399} & \underline{0.505} & \underline{0.551} & \underline{0.618} & \underline{0.340} & \underline{0.426} \\
        
        \bottomrule
    \end{tabular}
    }
    \label{tab:exp_bench_itr}
    \vspace{-15pt}
\end{table}
\begin{table}[t]
    \centering
    \setlength\tabcolsep{4pt}
    \caption{Benchmarking results on image-text grounding in Recall@1. The \textbf{1st}/\underline{2nd} best results are highlighted in \textbf{bold} and \underline{underline}, respectively.}
    \resizebox{\linewidth}{!}{
    \begin{tabular}{ccccccccc}
        \toprule

        Algorithm & \multicolumn{4}{c}{GOT-W~\cite{chen2020graph}} & \multicolumn{4}{c}{GOT-FGW~\cite{chen2020graph}} \\
        \cmidrule(lr){1-1} \cmidrule(lr){2-5} \cmidrule(lr){6-9}
        Dataset & \multicolumn{2}{c}{COCO} & \multicolumn{2}{c}{Flickr30K} & \multicolumn{2}{c}{COCO} & \multicolumn{2}{c}{Flickr30K}\\
        \cmidrule(lr){1-1} \cmidrule(lr){2-3} \cmidrule(lr){4-5} \cmidrule(lr){6-7} \cmidrule(lr){8-9}
        \#-th Query Round & 5 & 10 & 5 & 10 & 5 & 10 & 5 & 10 \\
        \midrule
        
        \textsc{Random}       & 0.510 & 0.577 & 0.356 & 0.441 & 0.545 & 0.607 & 0.550 & 0.628 \\
        \textsc{Entropy}      & 0.534 & 0.620 & 0.363 & 0.455 & 0.569 & 0.650 & 0.563 & 0.652 \\
        \textsc{Margin}       & 0.532 & 0.625 & 0.359 & 0.449 & 0.568 & 0.656 & 0.562 & 0.651 \\
        \textsc{Least Confident}  & \underline{0.535} & 0.625 & 0.360 & 0.448 & 0.570 & 0.655 & 0.567 & 0.657 \\
        \textsc{Density}  & 0.518 & 0.585 & 0.355 & 0.439 & 0.553 & 0.616 & 0.567 & 0.657 \\
        \textsc{Diversity}    & 0.533 & 0.626 & 0.346 & 0.426 & 0.568 & 0.656 & 0.561 & 0.650 \\
        \algname-\textsc{entropy}  & \textbf{0.538} & \textbf{0.628} & \textbf{0.366} & \textbf{0.461} & \underline{0.578} & \underline{0.668} & \underline{0.573} & \underline{0.670} \\
        \algname-$L_2$ & \textbf{0.538} & \underline{0.627} & \underline{0.365} & \underline{0.459} & \textbf{0.582} & \textbf{0.678} & \textbf{0.575} & \textbf{0.671} \\
        
        \bottomrule
    \end{tabular}
    }
    \label{tab:exp_bench_itg}
    \vspace{-15pt}
\end{table}
We include two tasks for benchmarking \algname\ on CDA: \textit{image-text retrieval} and \textit{image-text grounding}. For both CDA tasks, we compare \algname-$L_2$/\textsc{entropy} with other baseline query methods in alignment performance by Recall@1. The results of image-text retrieval and image-text grounding can be found in Tables~\ref{tab:exp_bench_itr} and~\ref{tab:exp_bench_itg}, respectively. We observe that \textbf{(1) \algname-\textsc{entropy}/$L_2$ consistently achieve state-of-the-art performance on both CDA tasks}, demonstrating the effectiveness of \algname\ on OT-based alignment methods for CDA. \textbf{(2) \algname-\textsc{entropy} aligns well with the objective of entropic OT}. \algname-\textsc{entropy} typically outperforms \algname-$L_2$ under most settings except for the image-text grounding task by GOT-FGW, which implies that $f_\text{entropy}$ generally aligns better with the objective of entropic OT alignment. \textbf{(3) Gradients characterize the informativeness of a candidate better than the plain transport plan $\mathbf{T}$}. \algname-\textsc{entropy} consistently outperforms \textsc{Entropy} by up to 2.6\% in Recall@1, suggesting that the gradients propagated through OT provide more accurate measures of the informativeness of candidates than directly applying the same utility function on $\mathbf{T}$.

\vspace{-5pt}
\subsection{Efficiency Results}

\begin{figure}[t]
  \centering
  \includegraphics[width=\linewidth, trim=0 7 0 5, clip]{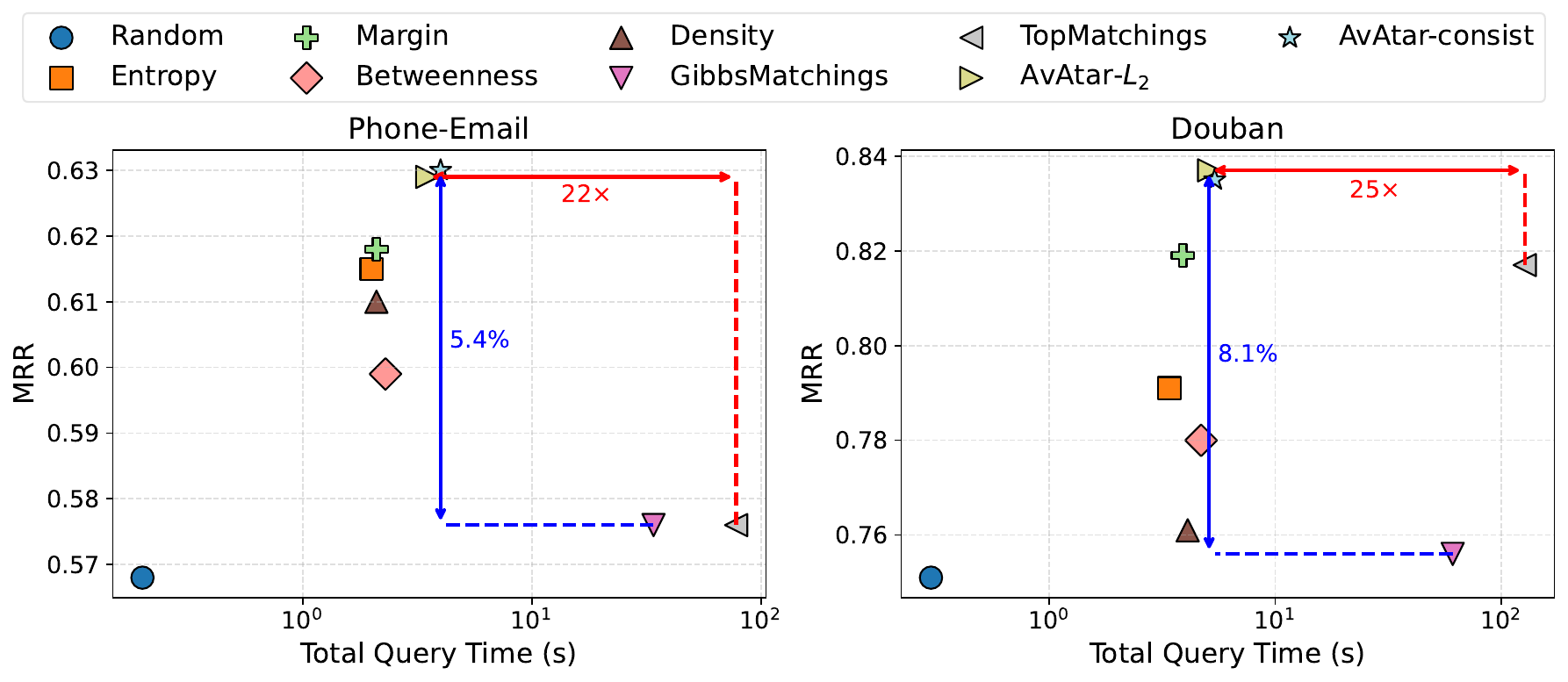}
  \caption{MRR vs. total query time of \algname\ and baselines, showing that \algname\ achieves up to \textbf{ 25$\times$ speed-up and 8.1\% improvement in MRR} compared to existing active NA methods.}
  \vspace{-15pt}
  \label{fig:exp_eff}
\end{figure}

We study the effectiveness-efficiency trade-off of proposed \algname-$L_2$/\textsc{consist} compared with other baseline query methods on phone-email and Douban datasets using PARROT~\cite{parrot}. The MRR vs. query time trade-off results are shown in Figure~\ref{fig:exp_eff}. We can see that \textbf{\algname-$L_2$/\textsc{consist} achieve a good balance between the alignment performance and query time}. Specifically, \algname\ achieves up to $25\times$ speed-up compared to \textsc{TopMatchings}, and up to 8.1\% improvement in MRR compared to \textsc{GibbsMatchings}, demonstrating both the effectiveness and efficiency of \algname\ over existing active NA methods. While most other baselines are slightly faster, they are consistently less effective in improving the alignment performance than \algname. More results on scalability are included in Appendix~\ref{ssec:app-exp-res-scale}.

\vspace{-5pt}
\subsection{Convergence Results} 

We evaluate the convergence of the the conjugate gradient (CG) method for solving Eq.~\eqref{eq:linear} and that of the Sinkhorn (SH) algorithm used for solving OT-based alignment, in terms of the difference between two consecutive solutions $\Delta\mathbf{s}^{(t)}=\|\mathbf{s}^{(t)}-\mathbf{s}^{(t-1)}\|_1$, on Phone-Email. For CG, $\mathbf{s}=\mathbf{y}$ in Eq.~\eqref{eq:linear}; for SH, $\mathbf{s}=\text{vec}(\mathbf{T})$. Results are normalized to ensure a fair comparison. The corresponding convergence result of \algname-$L_2$ and \algname-\textsc{consist}, are shown in Figure~\ref{fig:exp_conv}. We observe that \textbf{(1) CG converges empirically} when applied to Eq.~\eqref{eq:linear}, which validates the convergence of \algname. \textbf{(2) The convergence of SH is more sensitive to $\epsilon$ than CG.} As the entropic regularization weight $\epsilon$ decreases, SH becomes increasingly ill-conditioned and harder to converge, while the convergence rate of CG remains relatively stable. \textbf{(3) CG converges faster than SH under the same $\epsilon$}, suggesting that the CG method, with a linear time complexity per iteration, incurs \textit{negligible overhead} compared to the Sinkhorn algorithm for solving OT.

\begin{figure}[t]
    \centering
    \begin{subfigure}[b]{0.49\linewidth}
        \centering
        \includegraphics[width=\textwidth, trim = 8 5 6 7, clip]{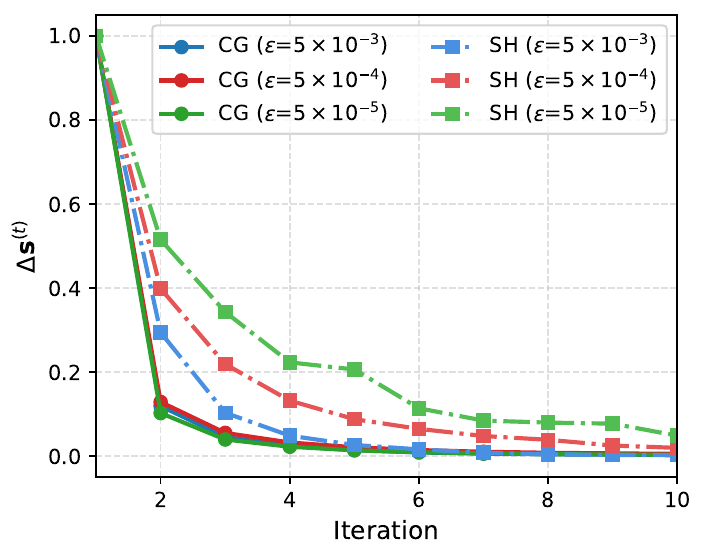}
        \caption{\algname-$L_2$}
    \end{subfigure}
    \hfill
    \begin{subfigure}[b]{0.49\linewidth}
        \centering
        \includegraphics[width=\textwidth, trim = 8 5 6 7, clip]{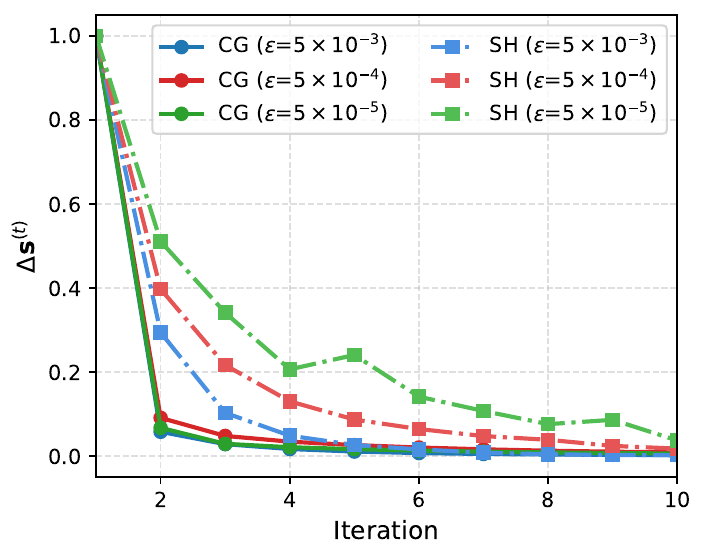}
        \caption{\algname-\textsc{consist}}
    \end{subfigure}
    \vspace{-5pt}
    \caption{Convergence analysis of the conjugate gradient (CG) and the Sinkhorn (SH) algorithm in \algname\ under different entropic regularization weight $\epsilon$.}
    \label{fig:exp_conv}
    \vspace{-20pt}
\end{figure}

\vspace{-5pt}
\subsection{Further Studies}
\subsubsection{Hyperparameter Analysis}
\paragraph{Query Budget $k$.} The impact of the query budget $k$ is presented in Tables~\ref{tab:exp_bench_na}-\ref{tab:exp_bench_itg} and Appendix~\ref{ssec:app-exp-res-eff}, which shows that the alignment performance of \algname\ improves monotonically as the query budget $k$ increases.

\vspace{-10pt}
\paragraph{Query Batch Size $n_b$.} We perform a sensitivity study on the query batch size $n_b$ in Figure~\ref{fig:exp_para_nb}. We adopt \algname-$L_2$ and test the performance of PARROT on two datasets as the query batch size $n_b$ changes under different query budget $k$. We can see in Figure~\ref{fig:exp_para_nb} that the performance of \algname-$L_2$ remains stable under different choices of $n_b$, with only slight performance improvement when $n_b$ is smaller on Douban.

\begin{figure}[h]
  \centering
  \vspace{-5pt}
  \includegraphics[width=1\linewidth, trim=0 9 0 6, clip]{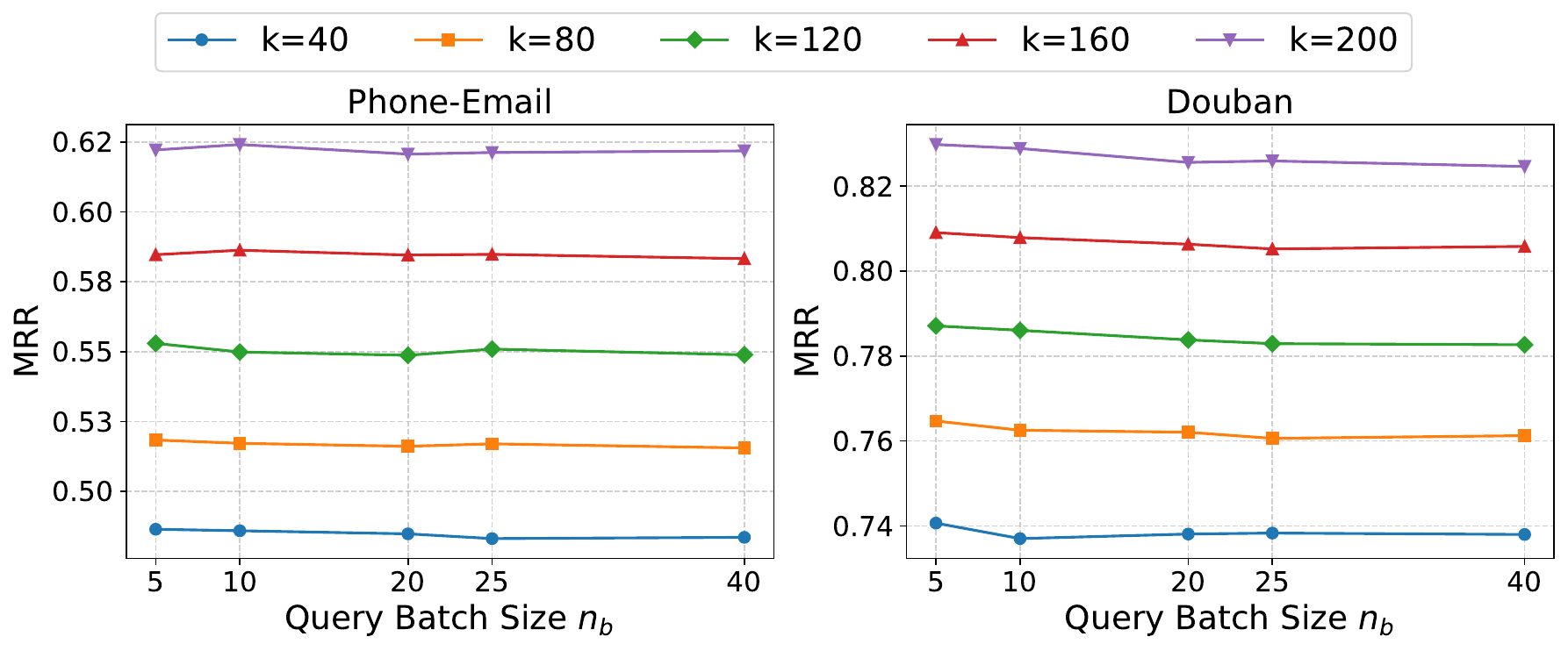}
  \caption{Parameter study for query batch size $n_b$ on \algname-$L_2$.}
  \vspace{-10pt}
  \label{fig:exp_para_nb}
\end{figure}

\vspace{-5pt}
\subsubsection{Ablation Study}

\paragraph{Sparse vs. Dense Matrix Operation in CG.}

As mentioned in Section~\ref{sec:method-ana}, \algname\ leverages the sparsity of the transport plan $\mathbf{T}$ to achieve linear time complexity. To verify the effectiveness and efficiency of sparse matrix operations in \algname, we compare the alignment performance of \algname\ with \algname(Dense), which computes Eq.~\eqref{eq:pair-impact-full} and Eq.~\eqref{eq:obj-impact} by dense matrix operations. Specifically,  we compare \algname-$L_2$ with \algname-$L_2$ (Dense), and \algname-\textsc{consist} with \algname-\textsc{consist} (Dense). The results on two datasets are shown in Table~\ref{tab:exp_ablate_sparse}. We observe that (1) \textbf{Sparse matrix operations do not harm the empirical performance of \algname}, as we can see that the alignment performance of \algname\ and \algname(Dense) remains close across different datasets. \textbf{(2) Sparse matrix operation significantly improves the efficiency of \algname}. \algname\ runs significantly faster than \algname(Dense) with up to $1010/118\approx8.6$ times speed-up, demonstrating the efficiency of sparse matrix operations on \algname.

\begin{table}[h]
    \centering
    \setlength\tabcolsep{5pt}
    \caption{Ablation study on sparse matrix operations adopted by \algname.}
    \vspace{-5pt}
    \label{tab:exp_ablate_sparse}
    \vspace{0.5em}
    \resizebox{\linewidth}{!}{
    \begin{tabular}{lcccc}
        \toprule
        Dataset & \multicolumn{2}{c}{Douban} & \multicolumn{2}{c}{ACM-DBLP-A} \\
        \cmidrule(lr){1-1} \cmidrule(lr){2-3} \cmidrule(lr){4-5}
        Metric & MRR & Time(s) & MRR & Time(s) \\
        \midrule
        \algname-$L_2$(Dense)              & \textbf{0.839} & 19  & 0.936 & 879\\
        \algname-$L_2$                      & 0.837 & \textbf{5.1} & 0.936 & \textbf{107} \\
        \midrule
        \algname-\textsc{consist}(Dense)   & 0.834 & 20  & \textbf{0.936} & 1010\\
        \algname-\textsc{consist}           & \textbf{0.835} & \textbf{5.4} & 0.935 & \textbf{118}\\
        \bottomrule
    \end{tabular}
    }
    \vspace{-10pt}
\end{table}

\paragraph{Posterior vs. Uniform Aggregation.}
We verify the necessity of using transport plan $\mathbf{T}$ as a posterior in Eq.~\eqref{eq:obj-impact} by comparing the performance of \algname\ with \algname(Uniform), which aggregated pairwise objects query impact in Eq.~\eqref{eq:pair-impact} uniformly for computing the object-level query impact, i.e., $\mathcal{I}(p_i)=\sum_{j=1}^m\mathcal{I}(p_{ij})$. As shown in Table~\ref{tab:exp_ablate_pos}, \algname\ consistently outperforms \algname(Uniform) on both Douban and ACM-DBLP-A under different choices of utility functions, demonstrating the informativeness of the transport plan $\mathbf{T}$ as a posterior.

\begin{figure*}[t]
    \centering
    \begin{subfigure}[b]{0.24\linewidth}
        \centering
        \includegraphics[width=\textwidth, trim = 8 5 6 7, clip]{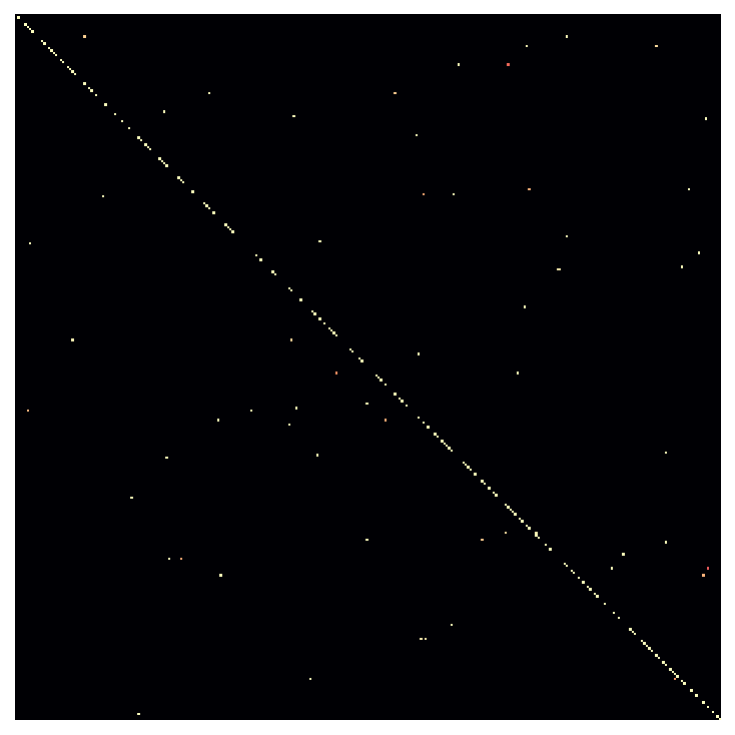}
        \caption{PARROT on Phone-Email}
    \end{subfigure}
    \hfill
    \begin{subfigure}[b]{0.24\linewidth}
        \centering
        \includegraphics[width=\textwidth, trim = 8 5 6 7, clip]{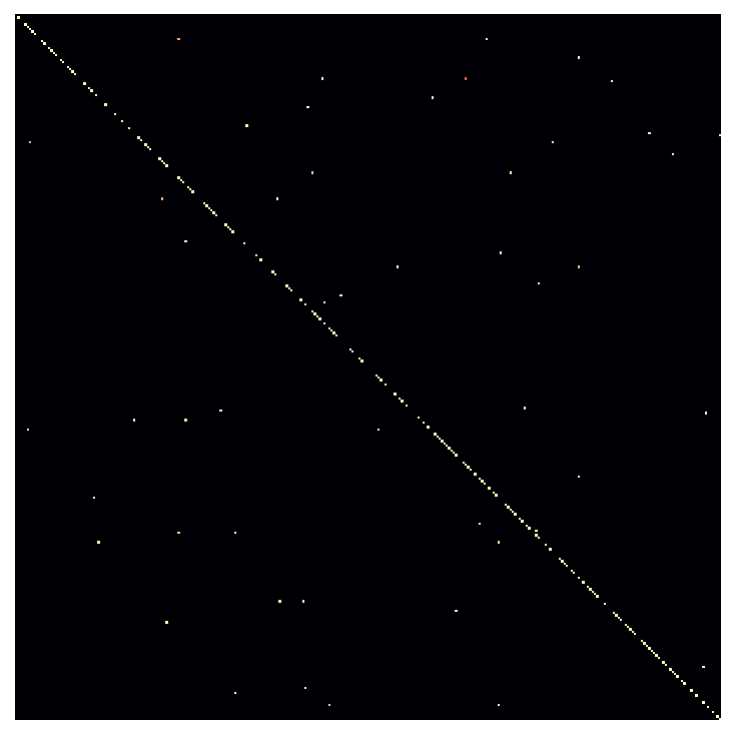}
        \caption{JOENA on Phone-Email}
    \end{subfigure}
    \hfill
    \begin{subfigure}[b]{0.24\linewidth}
        \centering
        \includegraphics[width=\textwidth, trim = 8 5 6 7, clip]{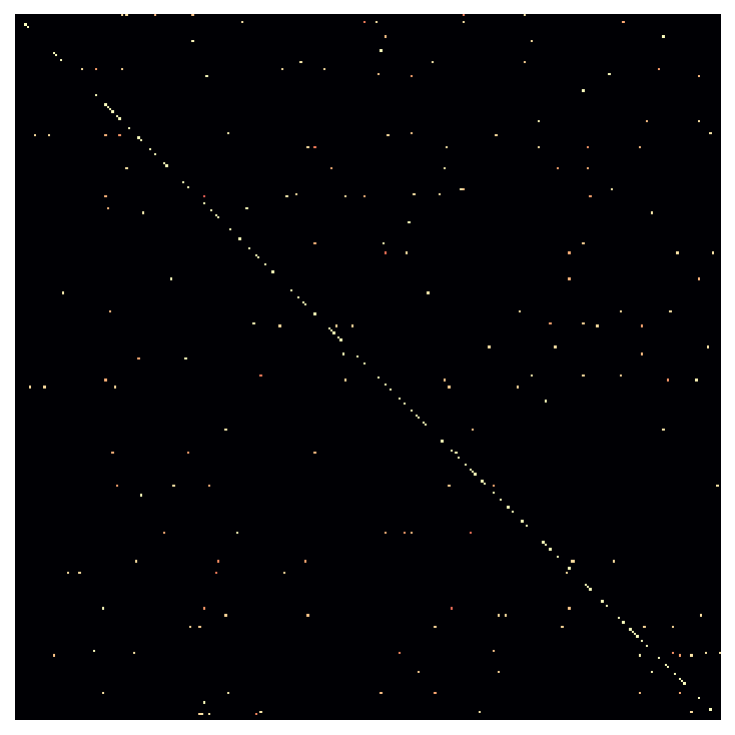}
        \caption{GOT-W on ImageNet-C}
    \end{subfigure}
    \hfill
    \begin{subfigure}[b]{0.24\linewidth}
        \centering
        \includegraphics[width=\textwidth, trim = 8 5 6 7, clip]{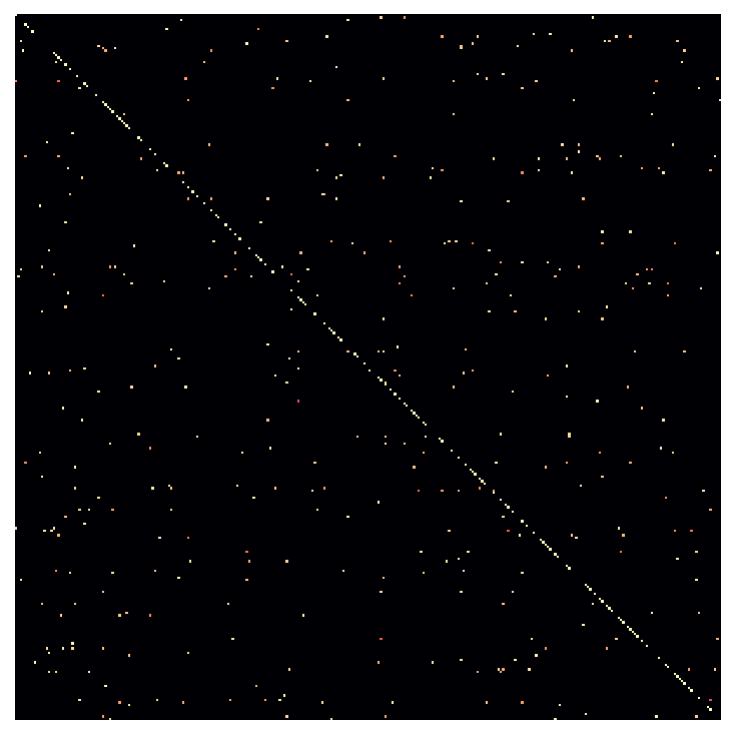}
        \caption{GOT-W on COCO}
    \end{subfigure}
    \vspace{-5pt}
    \caption{Visualization of transport plans $\mathbf{T}$ of different OT-based methods on different datasets under default hyperparameter settings in their original papers. Lighter pixels denotes higher value. Ground-truth alignment are moved to diagonals for better visualization.}
    \label{fig:exp_vis}
    \vspace{-15pt}
\end{figure*}

\begin{table}[ht]
    \centering
    \setlength\tabcolsep{5pt}
    \caption{Ablation study on posterior aggregation of pairwise objects query impact in \algname.}
    \vspace{-5pt}
    \label{tab:exp_ablate_pos}
    \vspace{0.5em}
    \resizebox{\linewidth}{!}{
    \begin{tabular}{lcccc}
        \toprule
        Dataset & \multicolumn{2}{c}{Douban} & \multicolumn{2}{c}{Phone-Email} \\
        \cmidrule(lr){1-1} \cmidrule(lr){2-3} \cmidrule(lr){4-5}
        Metric & Hits@1 & MRR & Hits@1 & MRR \\
        \midrule
        \algname-$L_2$(Uniform)             & 0.558 & 0.669 & 0.397 & 0.522 \\
        \algname-$L_2$                      & \textbf{0.755} & \textbf{0.837} & \textbf{0.495} & \textbf{0.629} \\
        \midrule
        \algname-\textsc{consist}(Uniform)  & 0.572 & 0.677 & 0.379 & 0.506 \\
        \algname-\textsc{consist}           & \textbf{0.753} & \textbf{0.835} & \textbf{0.498} & \textbf{0.630} \\
        \bottomrule
    \end{tabular}
    }
    \vspace{-10pt}
\end{table}

\vspace{-5pt}
\subsection{Visualizations}

\paragraph{Sparsity of Transport Plans}
To verify the sparsity of the output transport plan $\mathbf{T}$, we visualize randomly sampled $\mathbf{T}$ (300x300) of different OT-based alignment methods under their default hyperparameter settings in Figure~\ref{fig:exp_vis}. The results show that \textbf{the $\mathbf{T}$ returned by different OT-based methods are empirically sparse} across different alignment tasks, \textbf{with over 99\% entries close to zero}, suggesting that OT-based methods typically choose a small entropic regularization weight $\epsilon$ to approximate deterministic, one-to-one alignment, making it possible for \algname\ to achieve linear time complexity by leveraging the sparsity of $\mathbf{T}$.

\vspace{-5pt}

\paragraph{Drifts of Transport Plans.}
To improve interpretability regarding the corrective benefit of the oracle, we include an additional study on the drift of $\mathbf{T}^{(i)}$ from $\mathbf{T}^{(0)}$ in the $i$-th query round, measured by the Frobenius norm of their difference (i.e., $\|\mathbf{T}^{(i)}-\mathbf{T}^{(0)}\|_{\text{F}}$. The results are shown in Figure~\ref{fig:exp_drift}, which shows that \textbf{(1) $\mathbf{T}^{(i)}$ gradually drifts from $\mathbf{T}^{(0)}$ with increasing level of supervision along query rounds}, and \textbf{(2) the drift of $\mathbf{T}$ positively correlates with the performance improvement of OT-based alignment methods}, suggesting that the acquired supervision from the oracle gradually corrects $\mathbf{T}$ to infer accurate alignment.

\vspace{-5pt}
\section{Related Work}\label{sec:related}
\subsection{Optimal Transport}
OT has recently attracted increasing attention in alignment problems for its effectiveness in aligning distributional structures and its efficiency enabled by entropic regularization. For NA, PARROT~\cite{parrot} proposes a regularized OT-based algorithm and integrates consistency principles for effective cost design. JOENA~\cite{joena} further unifies embedding learning and OT optimization, enabling end-to-end training of an effective and robust NA model. For CDA, GOT~\cite{chen2020graph} combines both Wasserstein and Gromov-Wasserstein distance and proposes a principled framework for regularizing cross-domain alignment. DCOT~\cite{wang2024dual} designs a dual-view OT framework for cross-modality retrieval.
While OT has been used extensively for alignment, few works have studied OT-based alignment methods under active learning settings.

\vspace{-5pt}
\subsection{Active Alignment}
Active alignment falls into the broad category of active learning, which studies how to maximally improve the performance of a machine learning model by querying as few supervision as possible from an oracle~\cite{ren2021survey, li2024survey}. The sparse literature on active alignment mainly focus on network data. \cite{malmi2017active} proposes two active NA methods that samples the most uncertain nodes via bipartite matching. Utilizing inter-network meta diagram and link selection, ActiveIter~\cite{ren2019meta} introduces an active NA methods specifically for social networks. Attent~\cite{attent} proposes an active learning method for attributed consistency-based NA. While promising for specific NA tasks, they rely on network data and are not readily applicable to other alignment tasks such as cross-domain alignment, which are critical stepping stones for cross-domain and multi-modal machine learning~\cite{gan2022vision, liang2024foundations,wei2024robust,zeng2025interformer,zeng2026harnessing,zeng2025hierarchical,zeng2026subspace}.

\vspace{-5pt}
\section{Conclusion}\label{sec:con}
In this paper, we study the active alignment problem based on optimal transport. We select the most informative candidates to query based on their impact on the global alignment results encoded by effective utility functions, computed by gradient propagation through the entropy-regularized OT formulation. To differentiate through OT, we leverage the adjoint-state method to reformulate the computation to solving a linear system, which is solvable via the conjugate gradient method with linear complexity and guaranteed convergence.
Based on these, we propose a generic active learning framework \algname\ for OT-based alignment applicable to diverse alignment tasks. Extensive experiments across three different alignment tasks demonstrate the effectiveness and scalability of \algname\ on real-world datasets.

\newpage
\section*{Impact Statement}

This paper presents work whose goal is to advance the field of Active Machine Learning and Optimal Transport. There are many potential societal consequences of our work, none of which we feel must be specifically highlighted here.
\section*{Acknowledgment}

This work is supported by NSF (2433308, 2433309, 2416606) and AFOSR (FA9550-24-1-0002). The content of the information in this document does not necessarily reflect the position or the policy of the Government, and no official endorsement should be inferred.  The U.S. Government is authorized to reproduce and distribute reprints for Government purposes notwithstanding any copyright notation here on.

\bibliography{sections/reference}
\bibliographystyle{icml2026}


\newpage
\appendix

\onecolumn

\section{Proof}



\subsection{Proof of Lemma~\ref{lem:pair-impact}}\label{ssec:proof-pair-impact}
\begin{lemma*}
    The gradient of a utility function $f(\mathbf{T})$ w.r.t. the cost function $\tilde{\mathbf{C}}$ under the entropy-regularized OT formulation can be computed by solving a linear system with respect to adjoint vectors $\mathbf{y}_\alpha\in\mathbb{R}^n,\mathbf{y}_\beta\in\mathbb{R}^m$ as follows,
    \begin{equation}
        \nabla_{\tilde{\mathbf{C}}}f=\frac{1}{\epsilon}\mathbf{T}\odot\left(\mathbf{y}_\alpha\mathbf{1}_m^\top+\mathbf{1}_n\mathbf{y}_\beta^\top-\nabla_{\mathbf{T}}f\right)
    \end{equation}
    s.t.
    \begin{equation}
        \underbrace{
        \begin{bmatrix}
        \text{diag}(\boldsymbol{\mu}) & \mathbf{T} \\
        \mathbf{T}^\top & \text{diag}(\boldsymbol{\nu})
        \end{bmatrix}
        }_{\mathbf{A}}
        \underbrace{
        \begin{bmatrix}
        \mathbf{y}_\alpha \\
        \mathbf{y}_\beta
        \end{bmatrix}
        }_\mathbf{y}
        =
        \underbrace{
        \begin{bmatrix}
        \left(\mathbf{T}\odot \nabla_{\mathbf{T}}f \right) \mathbf{1}_m\\
        \left(\mathbf{T}\odot \nabla_{\mathbf{T}}f \right)^\top \mathbf{1}_n
        \end{bmatrix}
        }_{\mathbf{b}}
    \end{equation}
    where $\epsilon$ is the entropic regularization weight, $\boldsymbol{\mu}, \boldsymbol{\nu}$ are marginal distributions of OT, and $\text{diag}(\cdot)$ creates a diagonal matrix from a vector.
\end{lemma*}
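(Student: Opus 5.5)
The plan is to write the first-order optimality conditions of the entropy-regularized problem \eqref{eq:ent-ot} (with $\tilde{\mathbf{C}}$ in place of $\mathbf{C}$) in terms of dual potentials, differentiate them implicitly, and then use the adjoint trick so that the Jacobian $\frac{\mathrm{d}\mathbf{T}}{\mathrm{d}\tilde{\mathbf{C}}}$ never has to be formed.

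\textbf{Step 1 (primal--dual characterization).} Introduce Lagrange multipliers $\boldsymbol{\alpha}\in\mathbb{R}^n$ and $\boldsymbol{\beta}\in\mathbb{R}^m$ for the two marginal constraints. Stationarity of the Lagrangian, with $\nabla\,\text{Ent}(\mathbf{T})=-\log\mathbf{T}$, gives
\[
\mathbf{T}=\exp\!\big((\boldsymbol{\alpha}\mathbf{1}_m^\top+\mathbf{1}_n\boldsymbol{\beta}^\top-\tilde{\mathbf{C}})/\epsilon\big)
\]
elementwise. The potentials are then pinned down by the residual equations
\[
G(\boldsymbol{\alpha},\boldsymbol{\beta},\tilde{\mathbf{C}})=\begin{bmatrix}\mathbf{T}\mathbf{1}_m-\boldsymbol{\mu}\\ \mathbf{T}^\top\mathbf{1}_n-\boldsymbol{\nu}\end{bmatrix}=\mathbf{0}.
\]
Strong convexity guarantees that $\mathbf{T}$ is unique. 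Hence $f$ depends on $\tilde{\mathbf{C}}$ only through this map.

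\textbf{Step 2 (linearization).} Take a perturbation $\mathrm{d}\tilde{\mathbf{C}}$. Differentiating the exponential form gives
\[
\mathrm{d}\mathbf{T}=\tfrac{1}{\epsilon}\,\mathbf{T}\odot(\mathrm{d}\boldsymbol{\alpha}\mathbf{1}_m^\top+\mathbf{1}_n\mathrm{d}\boldsymbol{\beta}^\top-\mathrm{d}\tilde{\mathbf{C}}).
\]
Substitute this into $\mathrm{d}G=\mathbf{0}$. Using $\mathbf{T}\mathbf{1}_m=\boldsymbol{\mu}$ and $\mathbf{T}^\top\mathbf{1}_n=\boldsymbol{\nu}$ at the optimum, the coefficient block is exactly $\mathbf{A}$. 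The result is
\[
\mathbf{A}\begin{bmatrix}\mathrm{d}\boldsymbol{\alpha}\\ \mathrm{d}\boldsymbol{\beta}\end{bmatrix}=\mathbf{r}(\mathrm{d}\tilde{\mathbf{C}}),\qquad \mathbf{r}(\mathrm{d}\tilde{\mathbf{C}}):=\begin{bmatrix}(\mathbf{T}\odot\mathrm{d}\tilde{\mathbf{C}})\mathbf{1}_m\\ (\mathbf{T}\odot\mathrm{d}\tilde{\mathbf{C}})^\top\mathbf{1}_n\end{bmatrix}.
\]
Next expand $\mathrm{d}f=\langle\nabla_{\mathbf{T}}f,\mathrm{d}\mathbf{T}\rangle$ using the identity $\langle \mathbf{G},\mathbf{T}\odot(\mathbf{u}\mathbf{1}^\top)\rangle=\mathbf{u}^\top(\mathbf{T}\odot\mathbf{G})\mathbf{1}$ and its column analogue. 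This yields
\[
\mathrm{d}f=\tfrac{1}{\epsilon}\Big(\mathbf{b}^\top[\mathrm{d}\boldsymbol{\alpha};\mathrm{d}\boldsymbol{\beta}]-\langle\mathbf{T}\odot\nabla_{\mathbf{T}}f,\mathrm{d}\tilde{\mathbf{C}}\rangle\Big),
\]
with $\mathbf{b}$ as in \eqref{eq:linear}.

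\textbf{Step 3 (adjoint step).} Let $\mathbf{y}$ be any solution of $\mathbf{A}\mathbf{y}=\mathbf{b}$. Since $\mathbf{A}$ is symmetric,
\[
\mathbf{b}^\top\mathrm{d}\mathbf{p}=\mathbf{y}^\top\mathbf{A}\,\mathrm{d}\mathbf{p}=\mathbf{y}^\top\mathbf{r}(\mathrm{d}\tilde{\mathbf{C}}),\qquad \mathrm{d}\mathbf{p}:=[\mathrm{d}\boldsymbol{\alpha};\mathrm{d}\boldsymbol{\beta}].
\]
By the same inner-product identity, $\mathbf{y}^\top\mathbf{r}(\mathrm{d}\tilde{\mathbf{C}})=\langle\mathbf{T}\odot(\mathbf{y}_\alpha\mathbf{1}_m^\top+\mathbf{1}_n\mathbf{y}_\beta^\top),\mathrm{d}\tilde{\mathbf{C}}\rangle$. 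Reading off the coefficient of $\mathrm{d}\tilde{\mathbf{C}}$ in $\mathrm{d}f$ gives \eqref{eq:pair-impact-cost}.

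\textbf{Main obstacle: $\mathbf{A}$ is singular.} The vector $(\mathbf{1}_n,-\mathbf{1}_m)$ lies in its kernel, reflecting the gauge freedom $\boldsymbol{\alpha}+c$, $\boldsymbol{\beta}-c$. So neither $\mathrm{d}\mathbf{p}$ nor $\mathbf{y}$ is unique, and the implicit function theorem cannot be applied naively. Two things must be checked.
\begin{itemize}
\item \emph{Solvability.} Both $\mathbf{b}$ and $\mathbf{r}(\mathrm{d}\tilde{\mathbf{C}})$ are orthogonal to the kernel, because the sum of their first block equals the sum of their second block. Since $\mathbf{A}$ is symmetric, its range is the orthogonal complement of its kernel, so both systems are solvable.
\item \emph{Well-definedness.} $\mathrm{d}\mathbf{T}$ is unaffected by the gauge shift, since $c\mathbf{1}_n\mathbf{1}_m^\top-\mathbf{1}_n c\mathbf{1}_m^\top=\mathbf{0}$. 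Likewise, replacing $\mathbf{y}$ by $\mathbf{y}+t(\mathbf{1}_n,-\mathbf{1}_m)$ leaves \eqref{eq:pair-impact-cost} unchanged.
\end{itemize}
One can alternatively fix the gauge, for instance by setting $\beta_m=0$, and apply the implicit function theorem on the reduced invertible system. I would first try to show, via $\mathbf{T}>0$ and connectivity of the bipartite support, that the kernel is exactly one-dimensional, which makes the gauge-fixing argument rigorous.
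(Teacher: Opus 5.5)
Your proposal is correct and follows the paper's proof essentially step for step. The shared steps are: the exponential form of $\mathbf{T}$ from the Lagrangian, linearization of the marginal constraints into $\mathbf{A}[\mathrm{d}\boldsymbol{\alpha};\mathrm{d}\boldsymbol{\beta}]=\mathbf{r}(\mathrm{d}\tilde{\mathbf{C}})$, expansion of $\mathrm{d}f$, and the adjoint swap using $\mathbf{A}=\mathbf{A}^\top$.

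Your handling of the gauge freedom is extra rigor that the paper's proof skips; it differentiates the potentials without comment and only shows singularity and $\mathbf{b}\in R(\mathbf{A})$ later, for the CG lemma. Two small notes on that part. The one-dimensional kernel is immediate: with $\boldsymbol{\mu},\boldsymbol{\nu}>0$, $\mathbf{T}_{i,j}>0$ for all $i,j$, so $\sum_{i,j}\mathbf{T}_{i,j}(s_i+t_j)^2=0$ forces $s_i=-t_j$ everywhere. And in the gauge-fixed implicit-function argument you must also drop one marginal equation, which is redundant because $\mathbf{1}_n^\top\boldsymbol{\mu}=\mathbf{1}_m^\top\boldsymbol{\nu}$.
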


\begin{proof}
    Consider the entropy-regularized OT formulation in Eq.~\eqref{eq:ent-ot}. We introduce two dual variables $\boldsymbol{\alpha}\in\mathbb{R}^n, \boldsymbol{\beta}\in\mathbb{R}^m$ for the marginal constraints of OT, and gives the Lagrangian of Eq.~\eqref{eq:ent-ot} as
    \begin{equation}
        \mathcal{L}(\mathbf{T}, \boldsymbol{\alpha}, \boldsymbol{\beta})=\langle\tilde{\mathbf{C}},\mathbf{T}\rangle+\epsilon\sum_{i,j}\mathbf{T}_{i,j}\left(\log\mathbf{T}_{i,j}-1\right)+\boldsymbol{\alpha}^\top(\boldsymbol{\mu}-\mathbf{T}\mathbf{1}_m)+\boldsymbol{\beta}^\top(\boldsymbol{\nu}-\mathbf{T}^\top\mathbf{1}_n)
    \end{equation}
    The first order condition   gives
    \begin{equation}
        \frac{\partial\mathcal{L}(\mathbf{T}, \boldsymbol{\alpha}, \boldsymbol{\beta})}{\partial\mathbf{T}_{i,j}}=\tilde{\mathbf{C}}_{i,j}+\epsilon\log\mathbf{T}_{i,j}-\boldsymbol{\alpha}_i-\boldsymbol{\beta}_j=0
    \end{equation}
    which yields
    \begin{equation}\label{eq:app-fo-ot}
        \mathbf{T}_{i,j}=\exp\left(\frac{\boldsymbol{\alpha}_i+\boldsymbol{\beta}_j-\tilde{\mathbf{C}}_{i,j}}{\epsilon}\right)
    \end{equation}
    for an optimal transport map $\mathbf{T}$ to the entropy-regularized OT problem. We take the differential of Eq.~\eqref{eq:app-fo-ot} as follows
    \begin{equation}
        \mathrm{d}\tilde{\mathbf{C}}_{i,j}+\epsilon\mathrm{d}\left(\log\mathbf{T}_{i,j}\right)-\mathrm{d}\boldsymbol{\mathbf{\alpha}}_i-\mathrm{d}\boldsymbol{\beta}_j=0
    \end{equation}
    which can be reformulated in matrix form as follows
    \begin{equation}\label{eq:app-diff-ot}
        \mathrm{d}\mathbf{T}=\frac{1}{\epsilon}\mathbf{T}\odot\left(\mathrm{d}\boldsymbol{\alpha}\mathbf{1}_m^\top+\mathbf{1}_n\mathrm{d}\boldsymbol{\beta}^\top-\mathrm{d}\tilde{\mathbf{C}}\right)
    \end{equation}
    Next, we impose the marginal constraints to derive a linear system of equations. Since the marginal constraints of OT are always satisfied, their differentials must be zero, i.e.,
    \begin{equation}\label{eq:app-diff-mc1}
        \mathrm{d}\mathbf{T}\mathbf{1}_m=0
    \end{equation}
    \begin{equation}\label{eq:app-diff-mc2}
        \left(\mathrm{d}\mathbf{T}\right)^\top\mathbf{1}_n=0
    \end{equation}
    Firstly, we plug Eq.~\eqref{eq:app-diff-ot} into Eq.~\eqref{eq:app-diff-mc1}, which gives
    \begin{equation}\label{eq:app-diff-mc1-full}
        \frac{1}{\epsilon}\left(\left(\mathbf{T}\odot\left(\mathrm{d}\boldsymbol{\alpha}\mathbf{1}_m^\top\right)\right)\mathbf{1}_m+\left(\mathbf{T}\odot\left(\mathbf{1}_n\mathrm{d}\boldsymbol{\beta}^\top\right)\right)\mathbf{1}_m-\left(\mathbf{T}\odot\mathrm{d}\tilde{\mathbf{C}}\right)\mathbf{1}_m\right)=0
    \end{equation}
    where
    \begin{equation}
        \left\{
        \begin{aligned}
            & \left(\mathbf{T}\odot\left(\mathrm{d}\boldsymbol{\alpha}\mathbf{1}_m^\top\right)\right)\mathbf{1}_m =\left(\mathbf{T}\mathbf{1}_m\right)\odot\mathrm{d}\boldsymbol{\alpha}=\text{diag}(\boldsymbol{\mu})\mathrm{d}\boldsymbol{\mathbf{\alpha}} \\
            & \left(\mathbf{T}\odot\left(\mathbf{1}_n\mathrm{d}\boldsymbol{\beta}^\top\right)\right)\mathbf{1}_m = \mathbf{T}\mathrm{d}\boldsymbol{\beta}
        \end{aligned}
        \right.
    \end{equation}
    Therefore, Eq~\eqref{eq:app-diff-mc1-full} gives the first part of the linear system as follows
    \begin{equation}\label{eq:app-diff-mc1-linear}
        \text{diag}(\boldsymbol{\mu})\mathrm{d}\boldsymbol{\mathbf{\alpha}}+\mathbf{T}\mathrm{d}\boldsymbol{\beta}=\left(\mathbf{T}\odot\mathrm{d}\tilde{\mathbf{C}}\right)\mathbf{1}_m
    \end{equation}
    Secondly, we plug Eq.~\eqref{eq:app-diff-ot} into Eq.~\eqref{eq:app-diff-mc2}, which gives
    \begin{equation}\label{eq:app-diff-mc2-full}
        \frac{1}{\epsilon}\left(\left(\mathbf{T}\odot\left(\mathrm{d}\boldsymbol{\alpha}\mathbf{1}_m^\top\right)\right)^\top\mathbf{1}_n+\left(\mathbf{T}\odot\left(\mathbf{1}_n\mathrm{d}\boldsymbol{\beta}^\top\right)\right)^\top\mathbf{1}_n-\left(\mathbf{T}\odot\mathrm{d}\tilde{\mathbf{C}}\right)^\top\mathbf{1}_n\right)=0
    \end{equation}
    where
    \begin{equation}
        \left\{
        \begin{aligned}
            & \left(\mathbf{T}\odot\left(\mathrm{d}\boldsymbol{\alpha}\mathbf{1}_m^\top\right)\right)^\top\mathbf{1}_n=\mathbf{T}^\top\mathrm{d}\boldsymbol{\mathbf{\alpha}}\\
            & \left(\mathbf{T}\odot\left(\mathbf{1}_n\mathrm{d}\boldsymbol{\beta}^\top\right)\right)^\top\mathbf{1}_n=\left(\mathbf{T}^\top\mathbf{1}_n\right)\odot\mathrm{d}\boldsymbol{\mathbf{\beta}}=\text{diag}(\boldsymbol{\nu})\mathrm{d}\boldsymbol{\beta}
        \end{aligned}
        \right.
    \end{equation}
    Therefore, Eq~\eqref{eq:app-diff-mc2-full} gives the second part of the linear system as follows
    \begin{equation}\label{eq:app-diff-mc2-linear}
        \mathbf{T}^\top\mathrm{d}\boldsymbol{\mathbf{\alpha}}+\text{diag}(\boldsymbol{\nu})\mathrm{d}\boldsymbol{\beta}=\left(\mathbf{T}\odot\mathrm{d}\tilde{\mathbf{C}}\right)^\top\mathbf{1}_n
    \end{equation}
    Combining Eq.~\eqref{eq:app-diff-mc1-linear} and~\eqref{eq:app-diff-mc2-linear} gives a block linear system as follows
    \begin{equation}\label{eq:app-linear-jac}
        \begin{bmatrix}
            \text{diag}(\boldsymbol{\mu}) & \mathbf{T} \\
            \mathbf{T}^\top & \text{diag}(\boldsymbol{\nu})
        \end{bmatrix}
        \begin{bmatrix}
            \mathrm{d}\boldsymbol{\alpha} \\ \mathrm{d}\boldsymbol{\beta}
        \end{bmatrix}=
        \begin{bmatrix}
            \left(\mathbf{T}\odot\mathrm{d}\tilde{\mathbf{C}}\right)\mathbf{1}_m \\
            \left(\mathbf{T}\odot\mathrm{d}\tilde{\mathbf{C}}\right)^\top\mathbf{1}_n
        \end{bmatrix}
    \end{equation}
    Now, we derive the differential of utility function $f(\mathbf{T}): \mathbb{R}^{n\times m}\rightarrow\mathbb{R}$. By definition, $\mathrm{d}f=\langle\nabla_{\mathbf{T}}f,\mathrm{d}\mathbf{T}\rangle$. We plug this equation into Eq.~\eqref{eq:app-diff-ot}, which gives
    \begin{equation}\label{eq:app-diff-util}
        \mathrm{d}f=\frac{1}{\epsilon}
        \left(
        \underbrace{
        \langle \mathbf{G}\mathbf{1}_m,\mathrm{d}\boldsymbol{\mathbf{\alpha}} \rangle +
        \langle \mathbf{G}^\top\mathbf{1}_n,\mathrm{d}\boldsymbol{\mathbf{\beta}} \rangle
        }_{\Omega}
        -
        \langle \mathbf{G},\mathrm{d}\tilde{\mathbf{C}} \rangle
        \right)
    \end{equation}
    where $\mathbf{G}$ is defined as
    \begin{equation}\label{eq:app-g}
        \mathbf{G}:=\mathbf{T}\odot\nabla_{\mathbf{T}}f
    \end{equation}
    At this point, both $\mathrm{d}\boldsymbol{\mathbf{\alpha}}$ and $\mathrm{d}\boldsymbol{\mathbf{\beta}}$ are implicitly defined by Eq.~\eqref{eq:app-linear-jac}. To eliminate $\mathrm{d}\boldsymbol{\mathbf{\alpha}}, \mathrm{d}\boldsymbol{\mathbf{\beta}}$, we introduce an adjoint vector $\mathbf{y}\in\mathbb{R}^{n+m}$ and define the following linear system in Eq.~\eqref{eq:app-linear-adjoint}. For readability, We write $\mathbf{y}=\begin{bmatrix}\mathbf{y}_{\alpha}\\ \mathbf{y}_{\beta}\end{bmatrix}$ where $\mathbf{y}_{\alpha}\in\mathbb{R}^n, \mathbf{y}_{\beta}\in\mathbb{R}^m$.
    \begin{equation}\label{eq:app-linear-adjoint}
        \begin{bmatrix}
            \text{diag}(\boldsymbol{\mu}) & \mathbf{T} \\
            \mathbf{T}^\top & \text{diag}(\boldsymbol{\nu})
        \end{bmatrix}
        \begin{bmatrix}
            \mathbf{y}_\alpha \\ \mathbf{y}_\beta
        \end{bmatrix}=
        \begin{bmatrix}
            \mathbf{G}\mathbf{1}_m \\
            \mathbf{G}^\top\mathbf{1}_n
        \end{bmatrix}
    \end{equation}
    Let
    \begin{equation}
        \mathbf{A}=
        \begin{bmatrix}
            \text{diag}(\boldsymbol{\mu}) & \mathbf{T} \\
            \mathbf{T}^\top & \text{diag}(\boldsymbol{\nu})
        \end{bmatrix}
    \end{equation}
    For Eq.~\eqref{eq:app-linear-jac} and~\eqref{eq:app-linear-adjoint}, we have
    \begin{equation}
        \mathbf{A}
        \begin{bmatrix}
            \mathrm{d}\boldsymbol{\alpha} \\ \mathrm{d}\boldsymbol{\beta}
        \end{bmatrix}=
        \begin{bmatrix}
            \left(\mathbf{T}\odot\mathrm{d}\tilde{\mathbf{C}}\right)\mathbf{1}_m \\
            \left(\mathbf{T}\odot\mathrm{d}\tilde{\mathbf{C}}\right)^\top\mathbf{1}_n
        \end{bmatrix},~~~
        \mathbf{A}
        \begin{bmatrix}
            \mathbf{y}_\alpha \\ \mathbf{y}_\beta
        \end{bmatrix}=
        \begin{bmatrix}
            \mathbf{G}\mathbf{1}_m \\
            \mathbf{G}^\top\mathbf{1}_n
        \end{bmatrix}
    \end{equation}
    Now, we can reformulate the $\Omega$ term in Eq~\eqref{eq:app-diff-util} as follows
    \begin{equation}
    \begin{aligned}
        \Omega&=
        \langle \mathbf{G}\mathbf{1}_m,\mathrm{d}\boldsymbol{\mathbf{\alpha}} \rangle +
        \langle \mathbf{G}^\top\mathbf{1}_n,\mathrm{d}\boldsymbol{\mathbf{\beta}} \rangle
        =
        \left\langle
        \begin{bmatrix}
            \mathbf{G}\mathbf{1}_m \\
            \mathbf{G}^\top\mathbf{1}_n
        \end{bmatrix},
        \begin{bmatrix}
            \mathrm{d}\boldsymbol{\mathbf{\alpha}} \\
            \mathrm{d}\boldsymbol{\mathbf{\beta}}
        \end{bmatrix}
        \right\rangle \\
        &=
        \underbrace{
        \left\langle
        \mathbf{A}
        \begin{bmatrix}
            \mathbf{y}_\alpha \\ \mathbf{y}_\beta
        \end{bmatrix},
        \begin{bmatrix}
            \mathrm{d}\boldsymbol{\mathbf{\alpha}} \\
            \mathrm{d}\boldsymbol{\mathbf{\beta}}
        \end{bmatrix}
        \right\rangle
        =
        \left\langle
        \begin{bmatrix}
            \mathbf{y}_\alpha \\ \mathbf{y}_\beta
        \end{bmatrix},
        \mathbf{A}
        \begin{bmatrix}
            \mathrm{d}\boldsymbol{\mathbf{\alpha}} \\
            \mathrm{d}\boldsymbol{\mathbf{\beta}}
        \end{bmatrix}
        \right\rangle
        }_{\text{given that }\mathbf{A}=\mathbf{A}^\top} \\
        &=
        \left\langle
        \begin{bmatrix}
            \mathbf{y}_\alpha \\ \mathbf{y}_\beta
        \end{bmatrix},
        \begin{bmatrix}
            \left(\mathbf{T}\odot\mathrm{d}\tilde{\mathbf{C}}\right)\mathbf{1}_m \\
            \left(\mathbf{T}\odot\mathrm{d}\tilde{\mathbf{C}}\right)^\top\mathbf{1}_n
        \end{bmatrix}
        \right\rangle
        =
        \langle \mathbf{y}_\alpha, \left(\mathbf{T}\odot\mathrm{d}\tilde{\mathbf{C}}\right)\mathbf{1}_m \rangle +
        \langle \mathbf{y}_\beta, \left(\mathbf{T}\odot\mathrm{d}\tilde{\mathbf{C}}\right)^\top\mathbf{1}_n \rangle
    \end{aligned}
    \end{equation}
    Therefore, we reformulate Eq~\eqref{eq:app-diff-util} as follows
    \begin{equation}
    \begin{aligned}
        \mathrm{d}f&=\frac{1}{\epsilon}
        \left(
        \langle \mathbf{y}_\alpha, \left(\mathbf{T}\odot\mathrm{d}\tilde{\mathbf{C}}\right)\mathbf{1}_m \rangle +
        \langle \mathbf{y}_\beta, \left(\mathbf{T}\odot\mathrm{d}\tilde{\mathbf{C}}\right)^\top\mathbf{1}_n \rangle
        -
        \langle \mathbf{G},\mathrm{d}\tilde{\mathbf{C}} \rangle
        \right)\\
        &=
        \frac{1}{\epsilon}
        \left(
        \langle \mathbf{T}\odot\left(\mathbf{y}_\alpha\mathbf{1}_m^\top\right), \mathrm{d}\tilde{\mathbf{C}} \rangle +
        \langle \mathbf{T}\odot\left(\mathbf{1}_n\mathbf{y}_\beta^\top\right), \mathrm{d}\tilde{\mathbf{C}} \rangle -
        \langle \mathbf{G},\mathrm{d}\tilde{\mathbf{C}} \rangle
        \right) \\
        &=
        \left\langle
        \frac{1}{\epsilon}
        \mathbf{T}\odot
        \left(
        \mathbf{y}_\alpha\mathbf{1}_m^\top +
        \mathbf{1}_n\mathbf{y}_\beta^\top -
        \nabla_{\mathbf{T}}f
        \right),
        \mathrm{d}\tilde{\mathbf{C}}
        \right\rangle
    \end{aligned}
    \end{equation}
    In this way, we have the final formulation of $\nabla_{\tilde{\mathbf{C}}}f$ as follows
    \begin{equation}
        \nabla_{\tilde{\mathbf{C}}}f=
        \frac{\mathrm{d}f}{\mathrm{d}\tilde{\mathbf{C}}}=
        \frac{1}{\epsilon}
        \mathbf{T}\odot
        \left(
        \mathbf{y}_\alpha\mathbf{1}_m^\top +
        \mathbf{1}_n\mathbf{y}_\beta^\top -
        \nabla_{\mathbf{T}}f
        \right)
    \end{equation}
    s.t.
    \begin{equation}
        \begin{bmatrix}
        \text{diag}(\boldsymbol{\mu}) & \mathbf{T} \\
        \mathbf{T}^\top & \text{diag}(\boldsymbol{\nu})
        \end{bmatrix}
        \begin{bmatrix}
        \mathbf{y}_\alpha \\
        \mathbf{y}_\beta
        \end{bmatrix}
        =
        \begin{bmatrix}
            \mathbf{G}\mathbf{1}_m \\
            \mathbf{G}^\top\mathbf{1}_n
        \end{bmatrix}
        =
        \begin{bmatrix}
        \left(\mathbf{T}\odot \nabla_{\mathbf{T}}f \right) \mathbf{1}_m\\
        \left(\mathbf{T}\odot \nabla_{\mathbf{T}}f \right)^\top \mathbf{1}_n
        \end{bmatrix}
    \end{equation}
\end{proof}

\vspace{-25pt}
\subsection{Proof of Singularity of $\mathbf{A}$ in Eq.~\eqref{eq:linear}}\label{ssec:proof-singular}
\vspace{-5pt}
\begin{proof}
By Eq.~\eqref{eq:linear}, we have
\begin{equation}
    \mathbf{A}=
    \begin{bmatrix}
        \text{diag}(\boldsymbol{\mu}) & \mathbf{T} \\
        \mathbf{T}^\top & \text{diag}(\boldsymbol{\nu})
    \end{bmatrix}
\end{equation}
where $\boldsymbol{\mu}, \boldsymbol{\nu}$ are marginal distributions of OT, and $\mathbf{T}$ is the solved transport map. As $\mathbf{T}$ satisfy the marginal constraints of OT, i.e., $\mathbf{T}\mathbf{1}_m=\boldsymbol{\mu}$, $\mathbf{T}^\top\mathbf{1}_n=\boldsymbol{\nu}$, we can find a nonzero vector $\begin{bmatrix} \mathbf{1}_n \\ -\mathbf{1}_m \end{bmatrix}$ in the null space of $\mathbf{A}$, i.e.,
\begin{equation}
    \mathbf{A}\mathbf{z}=
    \begin{bmatrix}
        \text{diag}(\boldsymbol{\mu}) & \mathbf{T} \\
        \mathbf{T}^\top & \text{diag}(\boldsymbol{\nu})
    \end{bmatrix}
    \begin{bmatrix}
        \mathbf{1}_n \\ -\mathbf{1}_m
    \end{bmatrix}
    =
    \begin{bmatrix}
        \text{diag}(\boldsymbol{\mu})\mathbf{1}_n -\mathbf{T}\mathbf{1}_m \\
        \mathbf{T}^\top\mathbf{1}_n-\text{diag}(\boldsymbol{\nu})\mathbf{1}_m
    \end{bmatrix}
    =
    \begin{bmatrix}
        \boldsymbol{\mu} -\mathbf{T}\mathbf{1}_m \\
        \mathbf{T}^\top\mathbf{1}_n-\boldsymbol{\nu}
    \end{bmatrix}
    =
    \mathbf{0}_{n+m}
\end{equation}
Therefore, $\mathbf{A}$ is a singular matrix.

\end{proof}

\vspace{-23pt}
\subsection{Proof of Lemma~\ref{lem:conjugate}}\label{ssec:proof-cg}
\vspace{-5pt}
\begin{lemma*}
    Eq.~\eqref{eq:linear} can be solved via conjugate gradient method with guaranteed convergence to global optimum.
\end{lemma*}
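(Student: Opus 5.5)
The plan is to show two facts and then invoke the standard theory of conjugate gradient on consistent singular systems (e.g., \cite{kaasschieter1988preconditioned, hayami2018convergence}). The two facts are: (i) $\mathbf{A}$ in Eq.~\eqref{eq:linear} is symmetric positive semidefinite; (ii) $\mathbf{b}$ lies in $\mathrm{range}(\mathbf{A})$. Granting both, CG started from $\mathbf{y}^{(0)}=\mathbf{0}$ (or any point of $\mathrm{range}(\mathbf{A})$) keeps every iterate in the Krylov space $\mathcal{K}_k(\mathbf{A},\mathbf{b})\subseteq\mathrm{range}(\mathbf{A})$. On that subspace $\mathbf{A}$ is positive definite. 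CG therefore behaves exactly like CG on an SPD system of dimension $r=\mathrm{rank}(\mathbf{A})$, and it terminates in at most $r$ steps in exact arithmetic. It minimizes the convex quadratic $\phi(\mathbf{y})=\tfrac12\mathbf{y}^\top\mathbf{A}\mathbf{y}-\mathbf{b}^\top\mathbf{y}$. Since $\phi$ is convex and bounded below when $\mathbf{b}\in\mathrm{range}(\mathbf{A})$, its minimizers are exactly the solutions of $\mathbf{A}\mathbf{y}=\mathbf{b}$. The limit is the minimum-norm solution, and any solution gives the same gradient in Eq.~\eqref{eq:pair-impact-cost}, which I would check briefly. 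Symmetry is immediate from the block form.

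\textbf{PSD.} For $\mathbf{z}=[\mathbf{u};\mathbf{v}]$, expand $\mathbf{z}^\top\mathbf{A}\mathbf{z}=\sum_i\mu_i u_i^2+2\sum_{i,j}\mathbf{T}_{i,j}u_iv_j+\sum_j\nu_j v_j^2$. Then substitute $\mu_i=\sum_j\mathbf{T}_{i,j}$ and $\nu_j=\sum_i\mathbf{T}_{i,j}$ from the marginal constraints. This gives $\mathbf{z}^\top\mathbf{A}\mathbf{z}=\sum_{i,j}\mathbf{T}_{i,j}(u_i+v_j)^2\ge 0$, using $\mathbf{T}_{i,j}>0$ from Eq.~\eqref{eq:app-fo-ot}. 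The same identity also characterizes the null space: $\mathbf{z}^\top\mathbf{A}\mathbf{z}=0$ iff $u_i+v_j=0$ for all $(i,j)$. Because the entropic plan is strictly positive, this forces $\mathbf{u}=c\mathbf{1}_n$, $\mathbf{v}=-c\mathbf{1}_m$. Hence $\ker\mathbf{A}=\mathrm{span}\{[\mathbf{1}_n;-\mathbf{1}_m]\}$, matching the singularity proof in Appendix~\ref{ssec:proof-singular}.

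\textbf{Range condition.} Since $\mathbf{A}$ is symmetric, $\mathrm{range}(\mathbf{A})=\ker(\mathbf{A})^\perp$. It thus suffices to check $\mathbf{b}\perp[\mathbf{1}_n;-\mathbf{1}_m]$. With $\mathbf{G}=\mathbf{T}\odot\nabla_{\mathbf{T}}f$, this inner product is $\mathbf{1}_n^\top\mathbf{G}\mathbf{1}_m-\mathbf{1}_m^\top\mathbf{G}^\top\mathbf{1}_n=0$, because both terms equal the total sum of entries of $\mathbf{G}$. This holds for every utility $f$, so the argument is $f$-agnostic.

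I expect the main obstacle to be the null-space characterization, which needs strict positivity of $\mathbf{T}$. Entropic OT guarantees that positivity exactly, but a sparsified or truncated $\mathbf{T}$ (as used for efficiency in Theorem~\ref{th:complex}) could disconnect the bipartite support graph and enlarge $\ker\mathbf{A}$. In that case I would argue componentwise. On each connected component of the support graph, the kernel vector is $[\mathbf{1};-\mathbf{1}]$ restricted to that component. The orthogonality of $\mathbf{b}$ still holds per component, because $\mathbf{G}$ is supported on the support of $\mathbf{T}$. This requires only that the sparsified $\mathbf{T}$ is used consistently in both $\mathbf{A}$ and $\mathbf{b}$, and that it still satisfies the marginals used to define the diagonal blocks.
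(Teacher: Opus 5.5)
Your proposal is correct and follows essentially the same route as the paper: symmetry and positive semidefiniteness via the identity $\sum_{i,j}\mathbf{T}_{i,j}(u_i+v_j)^2\ge 0$, then $\mathbf{b}\perp[\mathbf{1}_n;-\mathbf{1}_m]$ to place $\mathbf{b}$ in $\mathrm{range}(\mathbf{A})$, then the cited results on CG for consistent semidefinite systems, with global optimality from convexity of $\tfrac12\mathbf{y}^\top\mathbf{A}\mathbf{y}-\mathbf{b}^\top\mathbf{y}$. Your additions are valid refinements rather than a different argument: you make explicit that the null-space characterization needs $\mathbf{T}_{i,j}>0$ (which the paper uses tacitly), you give the Krylov-subspace and minimum-norm remark, and you check that a shift by $c[\mathbf{1}_n;-\mathbf{1}_m]$ leaves Eq.~\eqref{eq:pair-impact-cost} unchanged.
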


\vspace{-12pt}
\begin{proof}
    First, we prove that Eq.~\eqref{eq:linear} can be solved via conjugate gradient method with guaranteed convergence. While the classical CG method requires the coefficient matrix $\mathbf{A}$ to be strictly positive definite, \cite{kaasschieter1988preconditioned, hayami2018convergence} show that CG method can be applied directly to singular systems with guaranteed convergence when (1) $\mathbf{A}$ is positive-semidefinite, and (2) vector $\mathbf{b}$ in Eq.~\eqref{eq:linear} lies in the range $R(\mathbf{A})$ of $\mathbf{A}$. 
    
    We begin by proving that matrix $\mathbf{A}$ in Eq.~\eqref{eq:linear} is positive-semidefinite. Firstly, $\mathbf{A}\in\mathbb{R}^{(n+m)\times(n+m)}$ is symmetric as $\mathbf{A}=\mathbf{A}^\top$. Secondly, for any vectors $\mathbf{s}\in\mathbb{R}^{n}, \mathbf{t}\in\mathbb{R}^{m}$, we have
    \begin{equation}\label{eq:semi-pos-A}
        \begin{bmatrix}
        \mathbf{s}\\\mathbf{t}
        \end{bmatrix}^\top
        \mathbf{A}
        \begin{bmatrix}
        \mathbf{s}\\\mathbf{t}
        \end{bmatrix}=
        \begin{bmatrix}
        \mathbf{s}\\\mathbf{t}
        \end{bmatrix}^\top
        \begin{bmatrix}
            \text{diag}(\boldsymbol{\mu}) & \mathbf{T} \\
            \mathbf{T}^\top & \text{diag}(\boldsymbol{\nu})
        \end{bmatrix}
        \begin{bmatrix}
        \mathbf{s}\\\mathbf{t}
        \end{bmatrix}=
        \mathbf{s}^\top\text{diag}(\boldsymbol{\mu})\mathbf{s}+2\mathbf{s}^\top\mathbf{T}\mathbf{t}+\mathbf{t}^\top\text{diag}(\boldsymbol{\nu})\mathbf{t}
    \end{equation}
    As $\mathbf{T}$ satisfy the marginal constraints of OT, i.e., $\sum_{j=1}^m\mathbf{T}_{i,j}=\boldsymbol{\mu}_i$, $\sum_{i=1}^n\mathbf{T}_{i,j}=\boldsymbol{\nu}_j$, we have
    \begin{equation}
        \mathbf{s}^\top\text{diag}(\boldsymbol{\mu})\mathbf{s}=\sum_{i=1}^n\mathbf{s}_i^2\boldsymbol{\mu}_i=\sum_{i=1}^n\mathbf{s}_i^2\sum_{j=1}^m\mathbf{T}_{i,j}
    \end{equation}
    \begin{equation}
        2\mathbf{s}^\top\mathbf{T}\mathbf{t}=2\sum_{i=1}^{n}\sum_{j=1}^m\mathbf{s}_i\mathbf{t}_j\mathbf{T}_{i,j}
    \end{equation}
    \begin{equation}
        \mathbf{t}^\top\text{diag}(\boldsymbol{\nu})\mathbf{t}=\sum_{j=1}^m\mathbf{t}_j^2\boldsymbol{\nu}_j=\sum_{j=1}^m\mathbf{t}_j^2\sum_{i=1}^n\mathbf{T}_{i,j}
    \end{equation}
    Therefore, Eq.~\eqref{eq:semi-pos-A} can be reformulated to
    \begin{equation}
        \begin{bmatrix}
        \mathbf{s}\\\mathbf{t}
        \end{bmatrix}^\top
        \mathbf{A}
        \begin{bmatrix}
        \mathbf{s}\\\mathbf{t}
        \end{bmatrix}=
        \sum_{i=1}^n\sum_{j=1}^m\mathbf{T}_{i,j}\left(\mathbf{s}_i^2+2\mathbf{s}_i\mathbf{t}_j+\mathbf{t}_j^2\right)=\sum_{i=1}^n\sum_{j=1}^m\mathbf{T}_{i,j}\left(\mathbf{s}_i+\mathbf{t}_j\right)^2\geq 0
    \end{equation}
    In this case, we have proven that $\mathbf{A}$ in Eq.~\eqref{eq:linear} is positive-semidefinite.

    Then, we prove that vector $\mathbf{b}$ in Eq.~\eqref{eq:linear} lies in the range $R(\mathbf{A})$ of $\mathbf{A}$ by showing that $\mathbf{b}$ is orthogonal to the null space of $\mathbf{A}$, i.e., $\mathbf{z}^\top\mathbf{b}=0,~\forall \mathbf{z}\in \text{Null}(\mathbf{A})$~\cite{strang2022introduction}. For a vector $\begin{bmatrix}\mathbf{s}\in\mathbb{R}^n\\ \mathbf{t}\in\mathbb{R}^m\end{bmatrix}$ in the null space of $\mathbf{A}$, we have $\mathbf{A}\begin{bmatrix}\mathbf{s}\\ \mathbf{t}\end{bmatrix}=\mathbf{0}_{n+m}$, which means that $\begin{bmatrix}\mathbf{s}\\ \mathbf{t}\end{bmatrix}^\top\mathbf{A}\begin{bmatrix}\mathbf{s}\\ \mathbf{t}\end{bmatrix}=0$. As we shown in Appendix~\ref{ssec:proof-cg}, for any vector $\begin{bmatrix}\mathbf{s}\\ \mathbf{t}\end{bmatrix}$, we have $\begin{bmatrix}\mathbf{s}\\ \mathbf{t}\end{bmatrix}^\top\mathbf{A}\begin{bmatrix}\mathbf{s}\\ \mathbf{t}\end{bmatrix}=\sum_{i=1}^n\sum_{j=1}^m\mathbf{T}_{i,j}\left(\mathbf{s}_i+\mathbf{t}_j\right)^2\geq 0$, therefore the null space of $\mathbf{A}$ is
    \begin{equation}
        \text{Null}(\mathbf{A})=
        \left\{
        \begin{bmatrix}
            \mathbf{s} \\ \mathbf{t}
        \end{bmatrix}:
        \mathbf{s}_i+\mathbf{t}_j=0
        \right\}
        =
        \text{span}\left\{
        \begin{bmatrix}
            \mathbf{1}_n \\ \mathbf{-1}_m
        \end{bmatrix}
        \right\}
    \end{equation}
    Since
    \begin{equation}\label{eq:range-A}
        \begin{bmatrix}
            \mathbf{1}_n\\ -\mathbf{1}_m
        \end{bmatrix}^\top
        \mathbf{b}=
        \begin{bmatrix}
            \mathbf{1}_n\\ -\mathbf{1}_m
        \end{bmatrix}^\top
        \begin{bmatrix}
        \left(\mathbf{T}\odot \nabla_{\mathbf{T}}f \right) \mathbf{1}_m\\
        \left(\mathbf{T}\odot \nabla_{\mathbf{T}}f \right)^\top \mathbf{1}_n
        \end{bmatrix}
        =0
    \end{equation}
     then $\mathbf{b}$ lies in the range $R(\mathbf{A})$ of $\mathbf{A}$. In this way, we have proven that CG solves Eq.~\eqref{eq:linear} with guaranteed convergence.

     Then, we prove that CG converges to the global optimum of Eq.~\eqref{eq:linear}. As $\mathbf{A}$ is symmetric, solving the linear system $\mathbf{A}\mathbf{y}=\mathbf{b}$ is equivalent to minimizing a quadratic function $g$ as follows~\cite{boyd2004convex}
    \begin{equation}
        g(\mathbf{y})=\frac{1}{2}\mathbf{y}^\top\mathbf{A}\mathbf{y}-\mathbf{b}^\top\mathbf{y}
    \end{equation}
    We can see that the Hessian of $g$ is $\mathbf{A}$, which is positive-semidefinite as proven above. Therefore, $g$ is a convex quadratic function w.r.t. $\mathbf{y}$~\cite{boyd2004convex}. Since Eq.~\eqref{eq:linear} corresponds to the first-order optimality condition $\nabla_{\mathbf{y}}g(\mathbf{y})=0$, any solution of the linear system is a global optimum~\cite{boyd2004convex}. Since the CG method converges to a solution of Eq.~\eqref{eq:linear} as proven above, the solution must be the global optimum.
\end{proof}

\vspace{-10pt}
\subsection{Proof of Theorem~\ref{th:complex}}\label{ssec:proof-complex}
\vspace{-5pt}
\begin{theorem*}
    (Time \& Space Complexity of \algname-$L_2$/\textsc{entropy}/\textsc{consist})\\
    \vspace{-10pt}
    
    The time complexity is $\mathcal{O}\left(\frac{k}{b}K(n+m)\right)$ for \algname-$L_2$/\textsc{entropy}, and $\mathcal{O}\left(\frac{k}{b}(K(n+m)+e)\right)$ for \algname-\textsc{consist}, where $e$ denotes the number of edges in the networks to be aligned\footnote{Without loss of generality, we assume $\mathcal{O}(e)\approx \mathcal{O}(e_1)\approx \mathcal{O}(e_2)$ where $e_i$ denotes the number of edges in the $i$-th networks~\cite{parrot, joena}.}. The space complexity of \algname-$L_2$/\textsc{entropy}/ \textsc{consist} is $\mathcal{O}(nm)$. $k$ is the total query budget, $b$ is the batch query size, $K$ is the number of iterations of CG, and $n,m$ are number of objects in the source and target sets, respectively.
\end{theorem*}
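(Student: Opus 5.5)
The plan is to count the cost of one outer iteration of Algorithm~\ref{alg:main} and multiply by the number of outer iterations, which is $\lceil k/n_b\rceil = \mathcal{O}(k/n_b)$, since each round adds $n_b$ objects to $\mathcal{Q}$. The quantity charged to \algname\ is the query-selection cost: Steps 7--12, i.e., the posterior impact computation and the batch selection. The re-solving of OT in Step 15 is attributed to the underlying OT-based alignment method and is not charged. Throughout I would invoke the sparsity of $\mathbf{T}$ discussed after Theorem~\ref{th:complex} and shown in Figure~\ref{fig:exp_vis}. Concretely, $\mathbf{T}$ is stored in sparse form with $\mathrm{nnz}(\mathbf{T})=\mathcal{O}(n+m)$, a near one-to-one plan with $\mathcal{O}(1)$ non-negligible entries per row and column. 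This modelling assumption is the crux of the linear bound, and I would state it explicitly.

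Within one round, the cost of Step 7 splits into three parts.

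\textbf{(i) Forming $\nabla_{\mathbf{T}}f$ and $\mathbf{b}$.} For $f_{L_2}$ and $f_\text{entropy}$, the entries of $\nabla_{\mathbf{T}}f$ in Table~\ref{tab:utility} are elementwise in $\mathbf{T}$. Only the entries on the support of $\mathbf{T}$ are needed, because both $\mathbf{b}$ (through $\mathbf{T}\odot\nabla_{\mathbf{T}}f$) and Eq.~\eqref{eq:pair-impact-full} (through the factor $\mathbf{T}_{i,j}$) multiply by $\mathbf{T}$. Hence $\mathbf{G}=\mathbf{T}\odot\nabla_{\mathbf{T}}f$ and its row and column sums cost $\mathcal{O}(n+m)$. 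For $f_\text{consist}$ the gradient is $2\mathbf{M}_1\mathbf{T}+2\mathbf{T}\mathbf{M}_2$. This is a sparse--sparse product, costing $\mathcal{O}(e\cdot\max\text{nnz per row/col of }\mathbf{T})=\mathcal{O}(e)$ under the $\mathcal{O}(1)$-per-row assumption. Again only the support of $\mathbf{T}$ needs to be kept, which gives the extra additive $e$ term.

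\textbf{(ii) Running $K$ CG iterations on Eq.~\eqref{eq:linear}.} By Lemma~\ref{lem:conjugate}, CG is applicable. Each iteration needs one matrix--vector product with $\mathbf{A}$: two diagonal scalings plus $\mathbf{T}\mathbf{y}_\beta$ and $\mathbf{T}^\top\mathbf{y}_\alpha$, costing $\mathcal{O}(n+m+\mathrm{nnz}(\mathbf{T}))=\mathcal{O}(n+m)$. It also needs $\mathcal{O}(1)$ inner products and axpy operations on vectors of length $n+m$. The total is therefore $\mathcal{O}(K(n+m))$.

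\textbf{(iii) Evaluating $\mathcal{I}(p_{ij})$ and aggregating into $\mathcal{I}(p_i)$.} Eq.~\eqref{eq:pair-impact-full} gives $\mathcal{I}(p_{ij})$ per entry. Eq.~\eqref{eq:obj-impact} weights it again by $\mathbf{T}_{i,j}$, so only support entries contribute, for a cost of $\mathcal{O}(n+m)$.

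Steps 8--12 select the $n_b$ largest values among at most $n$ candidates. With a partial sort or heap this costs $\mathcal{O}(n+n_b\log n)$, which is dominated by the terms above (the bound $n_b\le n$ holds trivially). Summing the three parts gives $\mathcal{O}(K(n+m))$ per round for $L_2$/\textsc{entropy} and $\mathcal{O}(K(n+m)+e)$ for \textsc{consist}. Multiplying by $k/n_b$ rounds yields the stated time bounds.

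For space, the dominant object is the transport plan $\mathbf{T}$ (and $\nabla_{\mathbf{T}}f$) as produced by the Sinkhorn solver. In the worst case these are dense $n\times m$ matrices, giving $\mathcal{O}(nm)$. The CG vectors take $\mathcal{O}(n+m)$, the Laplacians take $\mathcal{O}(e)$, and the pools take $\mathcal{O}(n)$; all are absorbed into $\mathcal{O}(nm)$, since $e\le\min(n^2,m^2)$-type terms are assumed comparable per the footnote. I expect the main obstacle to be the rigorous justification of the $\mathcal{O}(e)$ and $\mathcal{O}(n+m)$ bounds, which depends entirely on the sparsity of $\mathbf{T}$ (thresholding near-zero entries). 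I would handle this by stating it as an explicit assumption, $\mathrm{nnz}(\mathbf{T})=\mathcal{O}(n+m)$ with bounded row and column support, rather than trying to derive it.
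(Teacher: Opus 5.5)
Your proposal is correct and follows essentially the paper's argument: per round it charges $\mathcal{O}(\mathrm{nnz}(\mathbf{T}))$ (plus $\mathcal{O}(e)$ for \textsc{consist}) to form $\mathbf{b}$, $\mathcal{O}(K\,\mathrm{nnz}(\mathbf{T}))$ to CG and $\mathcal{O}(\mathrm{nnz}(\mathbf{T}))$ to the posterior aggregation, assumes $\mathrm{nnz}(\mathbf{T})=\mathcal{O}(n+m)$, and multiplies by $k/n_b$ rounds; for space the paper cites the dense cost matrix $\mathbf{C}$ used in Eq.~\eqref{eq:pair-impact-full} rather than a dense $\mathbf{T}$, which gives the same $\mathcal{O}(nm)$. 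One small slip is in the top-$n_b$ selection step, which the paper does not count at all: $\mathcal{O}(n+n_b\log n)$ is not dominated by $\mathcal{O}(K(n+m))$ when $n_b=\Theta(n)$ and $K=\mathcal{O}(1)$, so use a linear-time selection algorithm to make that step $\mathcal{O}(n)$.
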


\vspace{-10pt}
\begin{proof}
    The computation of \algname\ consist of two main steps: 1) compute the pairwise object query impact by Eq.~\eqref{eq:pair-impact-full} 2) compute the posterior object query impact by Eq.~\eqref{eq:obj-impact}. Note that the transport map $\mathbf{T}$ at the end of each query round is typically a deterministic and \textit{sparse} matrix with $\mathcal{O}(n+m)$ non-zero entries for alignment problems~\cite{chen2020graph, parrot, planetalign}, as shown empirically in Figure~\ref{fig:exp_drift}, making $\mathbf{A}$ in Eq.~\eqref{eq:linear} sparse as well. We denote the number of non-zero entries in $\mathbf{T}$ by nnz$(\mathbf{T})$. For the first step, we need to compute the vector $\mathbf{b}$ in Eq.~\eqref{eq:linear} first in $\mathcal{O}(\text{nnz}(\mathbf{T}))$ for \algname-$L_2$/\textsc{entropy}, and $\mathcal{O}(\text{nnz}(\mathbf{T})+n+m+e)$ for \algname-\textsc{consist}~\cite{macskassy2009using}. Then, we solve the linear system in Eq.~\eqref{eq:linear} via the CG method with a time complexity of $\mathcal{O}(K\text{nnz}(\mathbf{T}))$, where $K$ is the total number of CG iterations~\cite{kaasschieter1988preconditioned}. For the second step, Eq.~\eqref{eq:obj-impact} aggregates pairwise query impact in $\mathcal{O}(\text{nnz}(\mathbf{T}))$. Therefore, for \algname-$L_2$/\textsc{entropy}, the time complexity for one query batch is $\mathcal{O}(\text{nnz}(\mathbf{T}))+\mathcal{O}(K\text{nnz}(\mathbf{T}))+\mathcal{O}(\text{nnz}(\mathbf{T}))=\mathcal{O}(K \text{nnz}(\mathbf{T}))\approx\mathcal{O}(K(n+m))$, making the total time complexity $\mathcal{O}(\frac{k}{b}K(n+m))$; for \algname-\textsc{consist}, the time complexity for one query batch is $\mathcal{O}(\text{nnz}(\mathbf{T})+n+m+e)+\mathcal{O}(K\text{nnz}(\mathbf{T}))+\mathcal{O}(\text{nnz}(\mathbf{T}))=\mathcal{O}(K\text{nnz}(\mathbf{T})+n+m+e)\approx \mathcal{O}(K(n+m)+e)$, making the total time complexity $\mathcal{O}\left(\frac{k}{b}(K(n+m)+e)\right)$

    While the transport map $\mathbf{T}$ is typically sparse for \algname, Eq.~\eqref{eq:pair-impact-full} requires explicit storage of the dense transport cost matrix $\mathbf{C}$. Therefore, the space complexity of \algname-$L_2$ is $\mathcal{O}(nm)$.
\end{proof}

\vspace{-10pt}
\subsection{Proof of Theorem~\ref{th:convergence}}\label{ssec:proof-convergence}
\begin{theorem}
    (Convergence of \algname)\\
    The conjugate gradient method applied to the linear system of Eq.~\eqref{eq:linear} converges at a linear convergence rate of $\frac{\sqrt{\lambda_1/\lambda_r}-1}{\sqrt{\lambda_1/\lambda_r}+1}$, where $\lambda_1,\lambda_r$ denotes the largest/smallest nonzero eigenvalues of $\mathbf{A}$ in Eq.~\eqref{eq:linear}.
\end{theorem}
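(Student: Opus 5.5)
The argument reduces CG on the singular system Eq.~\eqref{eq:linear} to CG on a nonsingular system posed on $R(\mathbf{A})$, and then applies the classical Chebyshev-polynomial bound there.

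\textbf{Step 1: setup from Lemma~\ref{lem:conjugate}.} We reuse three facts established there:
\begin{itemize}
\item $\mathbf{A}$ is symmetric positive semidefinite.
\item $\text{Null}(\mathbf{A})=\text{span}\{\mathbf{z}\}$ with $\mathbf{z}=[\mathbf{1}_n;-\mathbf{1}_m]$.
\item $\mathbf{b}\in R(\mathbf{A})=\text{Null}(\mathbf{A})^\perp$.
\end{itemize}
Write the spectral decomposition $\mathbf{A}=\sum_{i=1}^{r}\lambda_i\mathbf{u}_i\mathbf{u}_i^\top$ with $\lambda_1\ge\dots\ge\lambda_r>0$, where $r=n+m-1$ if the bipartite support graph of $\mathbf{T}$ is connected (which holds for entropic OT, since $\mathbf{T}>0$ entrywise). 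Start CG from $\mathbf{y}^{(0)}=\mathbf{0}$, or from any point in $R(\mathbf{A})$.

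\textbf{Step 2: the iterates stay in $R(\mathbf{A})$.} By induction, every residual $\mathbf{r}^{(t)}=\mathbf{b}-\mathbf{A}\mathbf{y}^{(t)}$ and every search direction lies in the Krylov space $\mathcal{K}_t(\mathbf{A},\mathbf{b})\subseteq R(\mathbf{A})$. Hence CG on $\mathbf{A}$ is exactly CG on the restriction $\mathbf{A}|_{R(\mathbf{A})}$. This restriction is symmetric positive definite with spectrum $\{\lambda_1,\dots,\lambda_r\}$, so the nullspace component never enters the iteration.

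\textbf{Step 3: the standard optimality bound.} CG minimizes the $\mathbf{A}$-norm error over $\mathbf{y}^{(0)}+\mathcal{K}_t$. Let $\mathbf{y}^\star=\mathbf{A}^{+}\mathbf{b}$ be the unique solution in $R(\mathbf{A})$. Then
\[
\|\mathbf{y}^{(t)}-\mathbf{y}^\star\|_{\mathbf{A}}\le\min_{p\in\mathcal{P}_t,\,p(0)=1}\max_{\lambda\in[\lambda_r,\lambda_1]}|p(\lambda)|\,\|\mathbf{y}^{(0)}-\mathbf{y}^\star\|_{\mathbf{A}} .
\]
The error $\mathbf{e}^{(0)}$ expands only in $\mathbf{u}_1,\dots,\mathbf{u}_r$, which is why only the nonzero eigenvalues appear in the max. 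We choose $p$ to be the shifted and scaled Chebyshev polynomial $T_t\bigl((\lambda_1+\lambda_r-2\lambda)/(\lambda_1-\lambda_r)\bigr)/T_t\bigl((\lambda_1+\lambda_r)/(\lambda_1-\lambda_r)\bigr)$. The standard estimate then gives a factor of at most $2\bigl(\frac{\sqrt{\kappa}-1}{\sqrt{\kappa}+1}\bigr)^t$ with $\kappa=\lambda_1/\lambda_r$, which is the claimed linear rate. We would cite the singular-CG results \cite{kaasschieter1988preconditioned, hayami2018convergence} for this reduction, consistent with the proof of Lemma~\ref{lem:conjugate}.

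\textbf{Main obstacle.} The key step is Step 2 together with the claim that $\mathbf{A}$ has the same rank on every iteration. The delicate part is ensuring that floating-point drift, or $\mathbf{b}$ being only approximately orthogonal to $\mathbf{z}$ (Sinkhorn marginals are inexact), does not excite the null direction. We would handle this by stating the result in exact arithmetic with exact marginals, optionally projecting $\mathbf{b}$ onto $\mathbf{z}^\perp$. We would also note that $\lambda_r>0$ requires connectivity of the support of $\mathbf{T}$; otherwise the nullspace is larger, but the same argument applies with $\lambda_r$ defined as the smallest nonzero eigenvalue.
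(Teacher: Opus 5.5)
Your proposal is correct and takes essentially the same route as the paper. The paper simply invokes the singular-CG error bound of \cite{hayami2018convergence} with $\kappa=\lambda_1/\lambda_r$, while you unpack that bound: you show the Krylov iterates stay in $R(\mathbf{A})$ (using $\mathbf{b}\perp\text{Null}(\mathbf{A})$ from Lemma~\ref{lem:conjugate}) and apply the Chebyshev estimate to the positive definite restriction. Your $\mathbf{A}$-(semi)norm formulation is also cleaner than the paper's $\boldsymbol{\Lambda}_r$-weighted expression, and your disconnected-support remark is correct provided you add that $\mathbf{b}$ stays in the enlarged range, which holds because $\mathbf{T}\odot\nabla_{\mathbf{T}}f$ has the same support as $\mathbf{T}$.
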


\begin{proof}
    \cite{hayami2018convergence} shows that the error bound of CG method on a singular system $\mathbf{A}\mathbf{y}=\mathbf{b}$ is
    \begin{equation}\label{eq:conv}
        \|\mathbf{y}^{(k)}-\mathbf{y}^*\|_{\boldsymbol{\Lambda}_r}\leq 2\left\{\frac{\sqrt{\kappa(\boldsymbol{\Lambda}_r)}-1}{\sqrt{\kappa(\boldsymbol{\Lambda}_r)}+1}\right\}^k \|\mathbf{y}^{(0)}-\mathbf{y}^*\|_{\boldsymbol{\Lambda}_r}
    \end{equation}
    where $\mathbf{y}^{(k)}$ denotes $\mathbf{y}$ in the $k$-th round of CG method, and $\mathbf{y}^*$ denotes the optimal solution. $\|\mathbf{x}\|_{\boldsymbol{\Lambda}_r}=\mathbf{x}^\top \boldsymbol{\Lambda}_r \mathbf{x}$, where $\boldsymbol{\Lambda}_r$ is a diagonal matrix of the nonzero eigenvalues of $\mathbf{A}$, i.e.,
    \begin{equation}
        \boldsymbol{\Lambda}_r=
        \begin{bmatrix}
        \lambda_1   &           &        \\
                    & \ddots    &        \\
                    &           & \lambda_r \\
        \end{bmatrix},~
        \lambda_1\geq\lambda_2\geq \ldots \geq \lambda_r >0
    \end{equation}
    where $r=\text{rank}(\mathbf{A})$, $\lambda_i$ are the nonzero eigenvalues of $\mathbf{A}$. $\kappa(\boldsymbol{\Lambda_r})=\frac{\lambda_1}{\lambda_r}\geq 1$. Since $\rho=\frac{\sqrt{\kappa(\boldsymbol{\Lambda}_r)}-1}{\sqrt{\kappa(\boldsymbol{\Lambda}_r)}+1}\in[0, 1)$ in this case, Eq.~\eqref{eq:linear} converges linearly to a solution by CG method~\cite{kaasschieter1988preconditioned}.
\end{proof}

\vspace{-10pt}
\section{Detailed Experimental Pipeline}\label{sec:app-exp-detail}
\subsection{Datasets Descriptions}
Detailed descriptions of datasets adopted in this paper are given as follows.
\vspace{-5pt}
\paragraph{Phone-Email~\cite{zhang2017ineat}.} A pair of communication networks with nodes representing people and edges representing documented communication between them via phone or email. The phone network contains 1,000 nodes and 41,191 edges, and the Email network contains 1,003 nodes and 4,627 edges. No attribute information is available for both networks. There are 1,000 common people across two networks as ground-truth alignment.
\vspace{-5pt}
\paragraph{ACM-DBLP-P(A)~\cite{tang2008arnetminer}.} A pair of undirected co-authorship networks with nodes representing authors and edges  representing co-authorship between two authors. The ACM network contains 9,872 nodes and 39,561 edges, and the DBLP network contains 9,916 nodes and 44,808 edges. ACM-DBLP-A contains node attribute information while ACM-DBLP-P are plain networks. There are 6,325 common authors across two networks as ground-truth alignment.
\vspace{-5pt}
\paragraph{Douban~\cite{final}.} A pair of online-offline social networks collected from Douban, with nodes representing users and edges representing user interactions on the website. The online network of Douban contains 3,906 nodes and 8,164 edges, and the offline network of Douban contains 1,118 nodes and 1,511 edges. Node attribute are constructed from the the location of a user, and edge attributes from the contact/friend relationship on the social platform. There are 1,118 common user across the two networks.
\vspace{-5pt}
\paragraph{CIFAR-10-C~\cite{hendrycks2019benchmarking}.} CIFAR-10-C consist of the original CIFAR-10~\cite{krizhevsky2009learning} test set images transformed by different types of corruptions at five levels of severity. In this paper, we adopt the gaussian perturbation version of CIFAR-10-C with level 5 severity. For image-text retrieval on CIFAR-10-C, we construct image-text pairs by associating each images with the corresponding textual description based on its class label, and treat retrieval as matching images to their corresponding text embeddings.
\vspace{-5pt}
\paragraph{ImageNet-C~\cite{hendrycks2019benchmarking}.} ImageNet-C consist of the original images from ImageNet transformed by different types of corruptions at five levels of severity. we adopt the gaussian perturbation version level 5 severity. Image-text retrieval on ImageNet-C follows the same paradigm as CIFAR10-C.
\vspace{-5pt}
\paragraph{COCO~\cite{lin2014microsoft}.} COCO is a large-scale dataset for object detection, segmentation, and captioning. It contains photos of 91 objects types and around 2,500,000 labeled instances in 328,000 images. For image-text grounding on COCO, which contains explicit fine-grained correspondence between phrases in a sentence and objects (or regions) in an image, we treat grounding as an matching problem between phrases and objects for an image.
\vspace{-5pt}
\paragraph{Flickr30K Entities~\cite{plummer2015flickr30k}.} Flickr30K Entities is a standard benchmark for sentence-based image description, constructed by augmenting around 158,000 captions from the Flickr30k~\cite{young2014image} dataset. Flickr30K Entities links the same entities across different captions for the same image, and associating them with manually annotated bounding boxes in Flickr30K. Image-text grounding on Flickr30K Entities follows the same paradigm as COCO.
\vspace{-5pt}
\subsection{Introduction of Baseline Query Strategies}
\paragraph{Random.} Selects candidate object from $\mathcal{X}$ randomly.
\vspace{-5pt}
\paragraph{Entropy~\cite{ren2021survey}.} Select candidate object from $\mathcal{X}$ whose Shannon entropy of its alignment results, i.e., $\sum_j^m-\mathbf{T}_{i,j}\log\mathbf{T}_{i,j}$ for $x_i\in\mathcal{X}$, is the largest. The intuition is to select the most uncertain object to query, measured by entropy of its alignment results.
\vspace{-5pt}
\paragraph{Margin~\cite{ren2021survey}.} Select candidate objects from $\mathcal{X}$ whose differences between the two most probable labels reflected by $\mathbf{T}$, i.e., $\mathbf{T}_{i,j_1}-\mathbf{T}_{i,j_2}$ where $\mathbf{T}_{i,j_1}:=\arg\max_j\mathbf{T}_{i,j}$ and $\mathbf{T}_{i,j_2}:=\arg\max_{j\neq j_1}\mathbf{T}_{i,j}$, is the smallest. The intuition is to select the most uncertain object to query, measured by such differences.
\vspace{-5pt}
\paragraph{Least Confident~\cite{attent}.} Select candidate objects from $\mathcal{X}$ whose confidence of alignment measured by the corresponding probability in $\mathbf{T}$, i.e., $\max_j\mathbf{T}_{i,j}$, is the smallest. The intuition is to select the least confident object to query.
\vspace{-5pt}
\paragraph{Betweenness~\cite{attent}.} Select candidate nodes of NA from $\mathcal{X}$ that has the largest betweenness centrality scores.~\cite{freeman1977set}.
\vspace{-5pt}
\paragraph{Density~\cite{li2024survey}.} Select candidate nodes from $\mathcal{X}$ that can represent all unlabeled source objects. The density score of a candidate object $x$ is defined as $\text{Density}(x_i)=\sum_{ x_j\in\mathcal{U}\cup\mathcal{N}(x_i,k)}^k\|\mathbf{T}_{i,:}-\mathbf{T}_{j,:}\|^2_2$, where $\mathcal{U}$ denotes the set of unlabeled source objects, and $\mathcal{N}(x_i,k)$ denotes the $k$ nearest neighbors of $x_i$ in $\mathcal{X}$. We adopt $k=20$~\cite{kim2022defense}. Candidate objects with the highest density score are selected for query.
\vspace{-5pt}
\paragraph{Diversity~\cite{li2024survey}.} Select candidate nodes from $\mathcal{X}$ that are different from labeled source objects. The density score of a candidate object $x$ is defined as $\text{Density}(x_i)=\sum_{x_j\in\mathcal{L}}\text{KL}(\mathbf{T}_{i,:}, \mathbf{T}_{j,:})$, where $\mathcal{L}\subset\mathcal{X}$ denotes the set of labeled source objects, and $\text{KL}(\cdot,\cdot)$ denotes the KL divergence.
\vspace{-5pt}
\paragraph{GibbsMatchings \& TopMatchings~\cite{malmi2017active}.}  Two matching-based query
strategies for active NA that select uncertain candidate nodes from $\mathcal{X}$ whose sampled matchings aligns to different target nodes during different samples.

\vspace{-5pt}
\subsection{Detailed Experimental Setup}
\paragraph{Metrics.} We adopt MRR and Recall@1 as the benchmarking metrics. For \textbf{mean reciprocal rank (MRR)}, it is defined as the average reciprocal of the rank at which the correct alignment appears in the candidate list, i.e., MRR$=\frac{1}{n}\sum_{i=1}^{n}\frac{1}{\text{rank}_i}$, where $n$ is the size of source object set, and rank$_i$ is the rank of the correct alignment for the $i$-th object in the source set; for \textbf{Recall@1}, it is defined as the proportion of source object whose correct alignment is ranked 1st by an alignment method, i.e., Recall@1$=\frac{1}{n}\sum_{i=1}^{n}\mathbb{I}\{\text{rank}_i=1\}$, where $\mathbb{I}\{\cdot\}$ denotes the indicator function. For the \textbf{total query time}, it is defined as the accumulated runtime of different active alignment methods under a fixed query budget $k$ and query batch size $n_b$. Formally, $t^{(k, n_b)}=\sum_{i=1}^{N}t_i^{(k, n_b)}$, where $t_i^{(k, n_b)}$ is time required to select query candidates at the i-th round, after the completion of the alignment algorithm. $N$ denotes the total number of query rounds.

\paragraph{Reproducibility.} For all experiments, the reported results are averaged against 5 different runs which randomly selects 20\% ground-truth alignment as prior supervision for each run. To ensure a fair comparison, all baselines are given the same total query budget of 20\% ground-truth, averaged against 10 query rounds (batches). Hyperparameters of all baselines alignment methods are set as default in their official implementations. Code and datasets are available at \hyperlink{https://github.com/yq-leo/AvAtar-ICML26}{https://github.com/yq-leo/AvAtar-ICML26}.

\vspace{-5pt}
\paragraph{Machine.} All experiments are conducted on a server with dual Intel® Xeon® Gold 6240R CPUs and 4 NVIDIA Tesla V100-SXM2 GPUs of 32GB memory.

\section{Additional Experimental Results}\label{sec:app-exp-res}

\subsection{Detailed Effectiveness Results}\label{ssec:app-exp-res-eff}

\begin{figure}[H]
  \centering
  \includegraphics[width=1\linewidth, trim=0 9 0 5, clip]{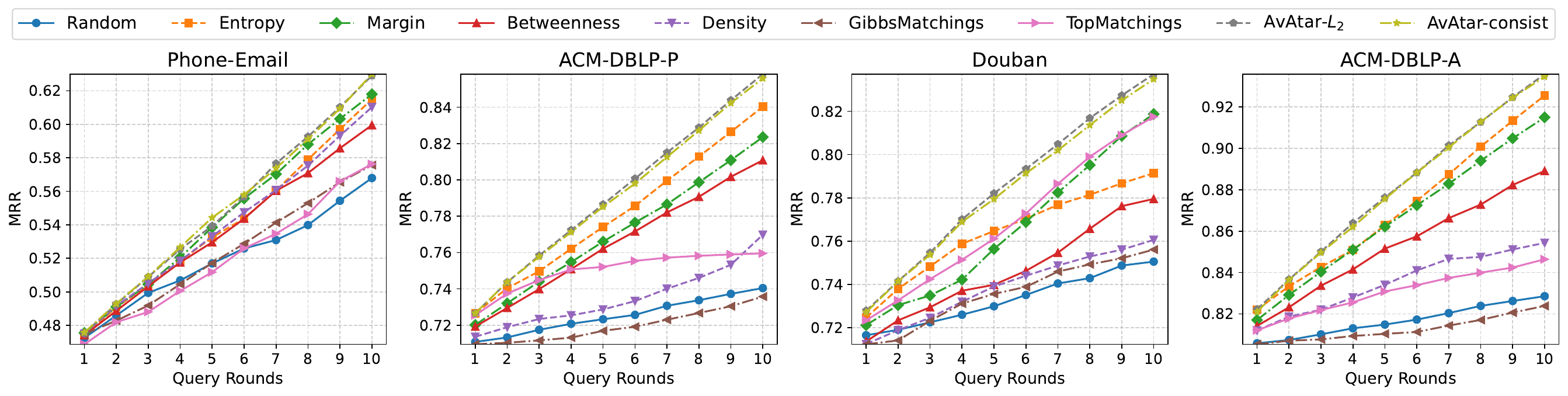}
  \caption{MRR vs. query round on four NA datasets using the PARROT~\cite{parrot} algorithm}
  \vspace{-5pt}
  \label{fig:exp_effect_parrot_full}
\end{figure}

\begin{figure}[H]
  \centering
  \includegraphics[width=1\linewidth, trim=0 9 0 5, clip]{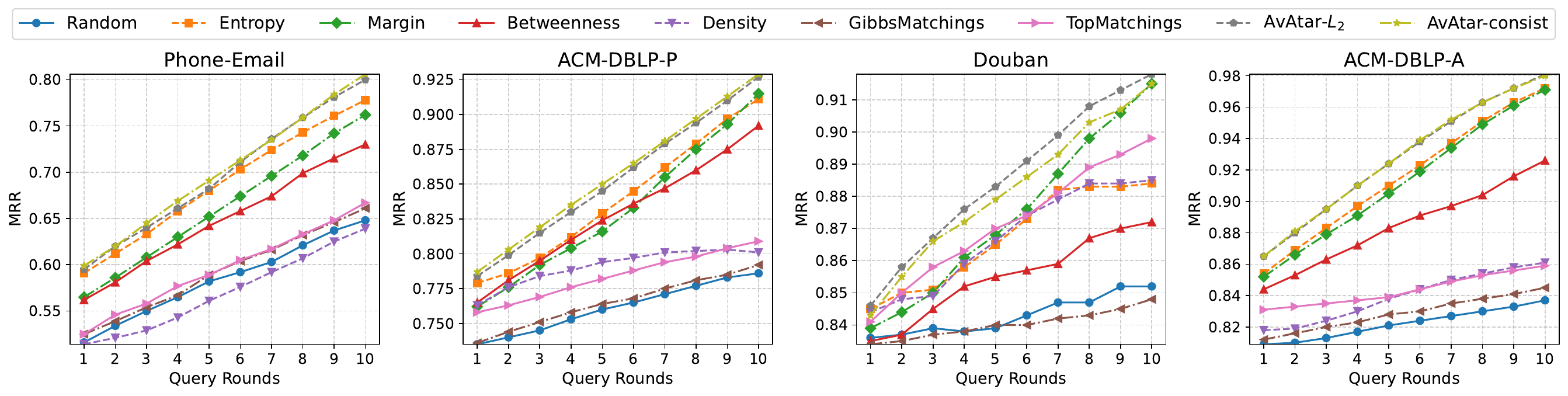}
  \caption{MRR vs. query round on four NA datasets using the JOENA~\cite{joena} algorithm}
  \vspace{-5pt}
  \label{fig:exp_effect_joena_full}
\end{figure}

\begin{figure}[H]
  \centering
  \includegraphics[width=1\linewidth, trim=0 9 0 5, clip]{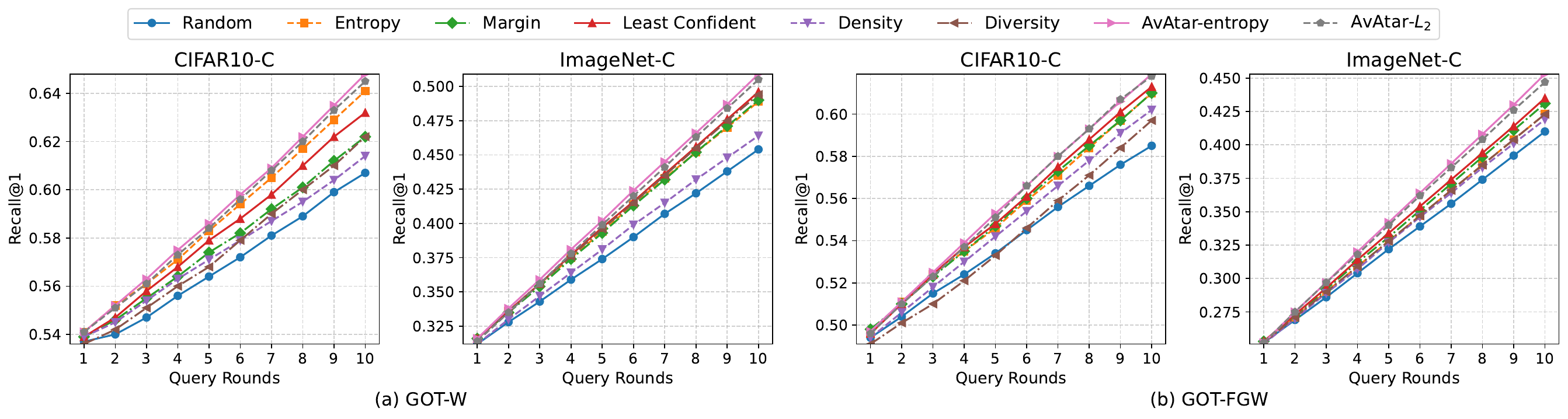}
  \caption{MRR vs. query round on two image-text retrieval datasets using the GOT~\cite{chen2020graph} algorithm}
  \vspace{-5pt}
  \label{fig:exp_effect_itr_full}
\end{figure}

\begin{figure}[H]
  \centering
  \includegraphics[width=1\linewidth, trim=0 9 0 5, clip]{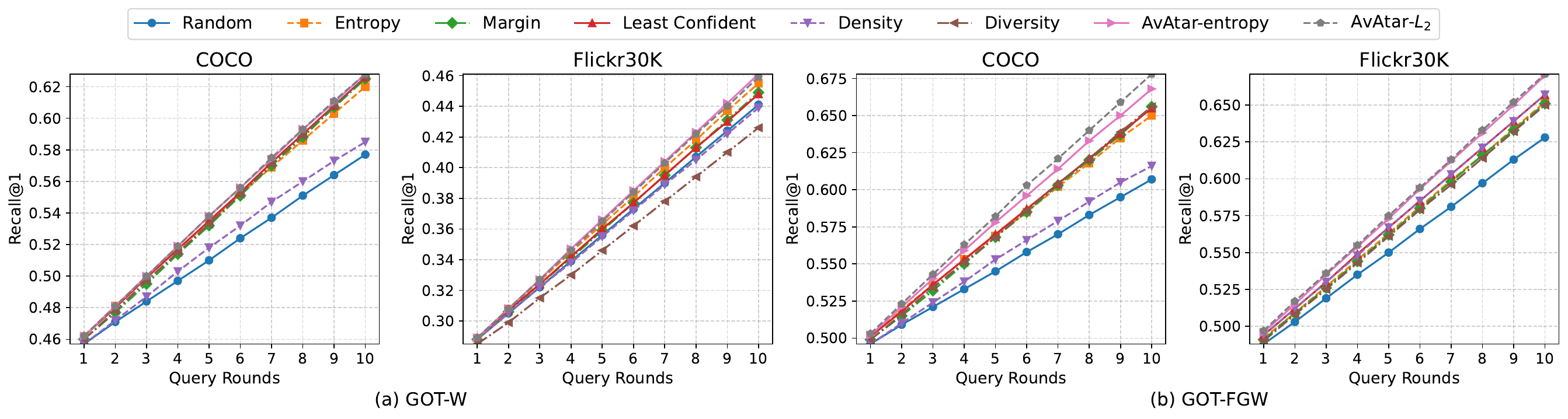}
  \caption{MRR vs. query round on two image-text grounding datasets using the GOT~\cite{chen2020graph} algorithm}
  \vspace{-5pt}
  \label{fig:exp_effect_itg_full}
\end{figure}

\subsection{Scalability Results}\label{ssec:app-exp-res-scale}

\begin{figure}[ht]
  \centering
  \includegraphics[width=0.8\linewidth, trim=0 9 0 5, clip]{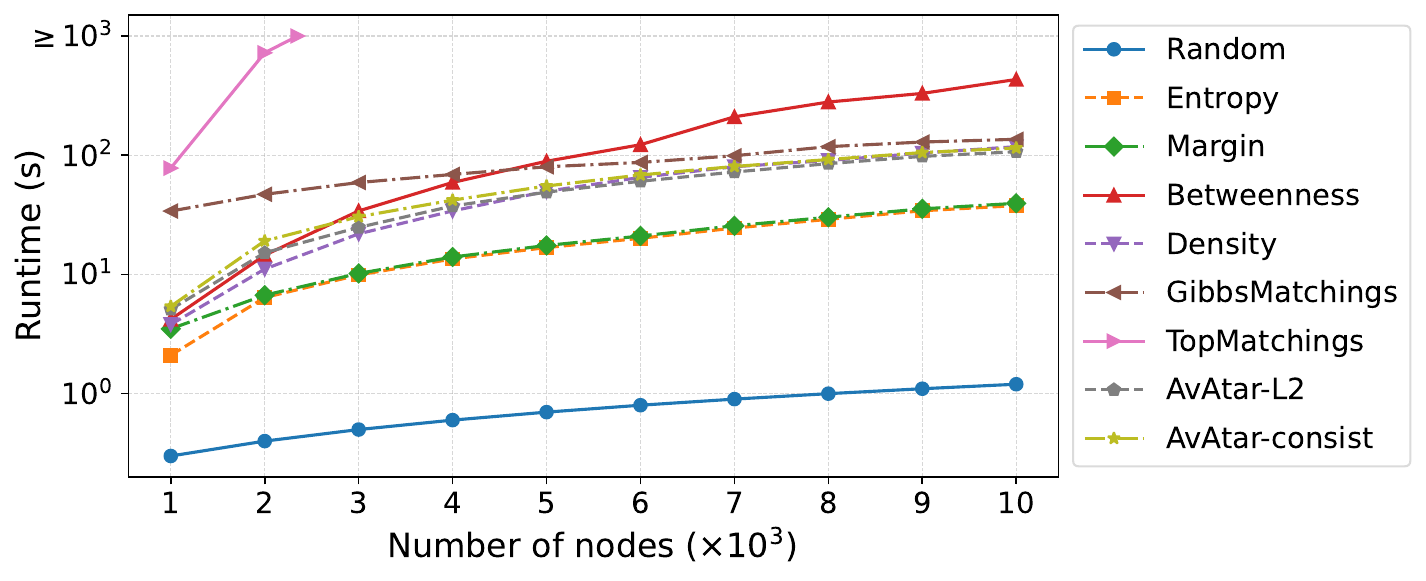}
  \caption{\textbf{Scalability results (under $10^3$ runtime limit) on synthetic graphs generated by the Erdős–Rényi (ER) model with an average node degree of 10.} The \textbf{x-axis} shows the number of nodes in the ER graphs (in $10^3$), and the \textbf{y-axis} of shows the runtime of query methods. The results confirm confirms that both \algname-$L_2$ and \algname-\text{consist} scales linearly w.r.t. the number of nodes in networks, making it scalable to large-scale alignment problems.}
  \vspace{-5pt}
  \label{fig:exp_scale}
\end{figure}

\begin{minipage}[t]{0.4\linewidth}
    \vspace{0pt}
    \centering
    \includegraphics[width=1\linewidth, trim=0 9 0 5, clip]{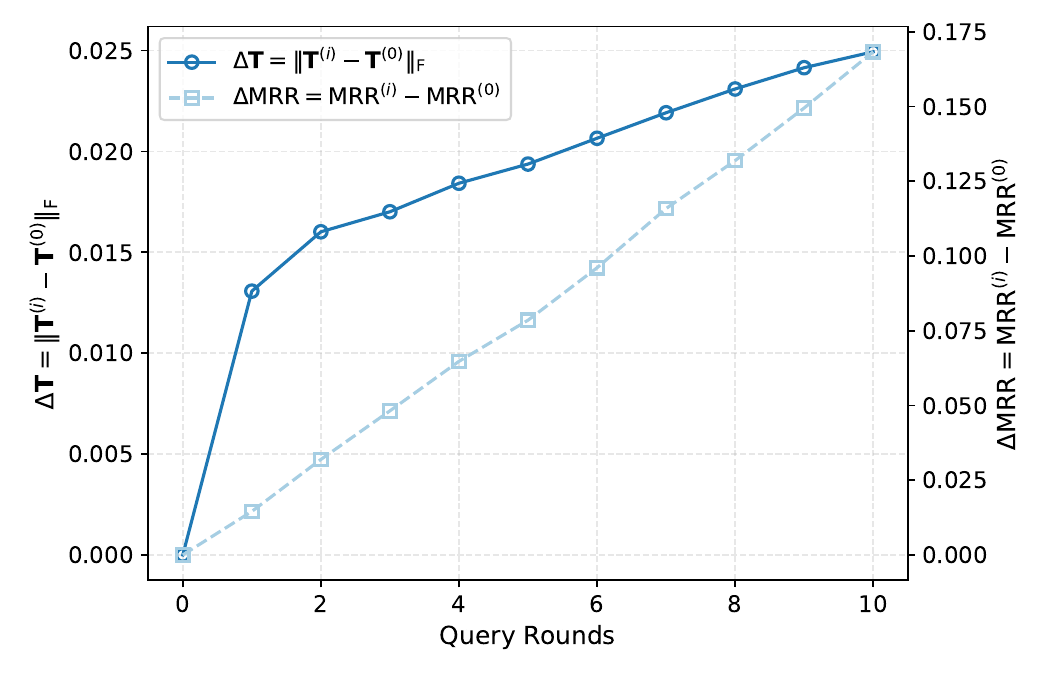}
    \captionof{figure}{Drift of transport plan $\mathbf{T}$ and alignment performance improvement across query rounds, on Phone-Email dataset using the PARROT~\cite{parrot} algorithm and \algname-$L_2$}
    \vspace{-5pt}
    \label{fig:exp_drift}
\end{minipage}
\hfill
\begin{minipage}[t]{0.55\linewidth}
    \vspace{0pt}
    \captionof{table}{Comparison of OT-based alignment algorithm (PARROT) + OT-specific active learning methods (\algname-$L_2$) with non-OT aligment algorithm (FINAL) + non-OT active learning methods (Attent) on NA tasks, in MRR.}
    \centering
    \label{tab:exp_non-ot}
    \resizebox{\linewidth}{!}{
    \begin{tabular}{lcccc}
            \toprule
            Dataset & \multicolumn{2}{c}{Phone-Email} & \multicolumn{2}{c}{Douban} \\
            \cmidrule(lr){1-1} \cmidrule(lr){2-3} \cmidrule(lr){4-5}
            \#-th Query Round & 5 & 10 & 5 & 10 \\
            \midrule
            PARROT + \algname-$L_2$     & 0.539 & 0.629 & 0.782 & 0.837 \\
            FINAL + Attent              & 0.158 & 0.219 & 0.315 & 0.348 \\
            \midrule
            $\Delta$                    & +0.381 & +0.410 & +0.467 & +0.489 \\
            \bottomrule
    \end{tabular}
    }
\end{minipage}

\subsection{Comparision with Non-OT Baselines}

To further demonstrate the power of OT-based alignment algorithm, e.g., PARROT~\cite{parrot}, equipped with OT-specific active learning approach, i.e., \algname, we compare its performance with a consistency-based alignment algorithm FINAL~\cite{final} with an active learning approach Attent~\cite{attent} tailored for consistency-based methods. The results are shown in Table~\ref{tab:exp_non-ot}, which shows that \textbf{(1) PARROT + \algname\ consistently outperforms FINAL + Attent}, demonstrating the power of OT-based alignment. \textbf{(2) The performance gap increases as the query round (budget) increases}, showing the superiority of \algname\ in boosting the performance of OT-based alignment methods.

\subsection{Study on External Hyperparameters}
We further validates the robustness of \algname\ against external hyperparameters of specific OT-based alignment algorithms, i.e., the penalizing factor $\beta$ and the entropic regularization parameter $\epsilon$, we conduct an additional set of hyperparameter study on Phone-Email with PARROT~\cite{parrot}, and report results in Table~\ref{tab:exp_ext_param}.

\begin{table}[h]
\centering
\begin{minipage}{0.95\linewidth}
\centering
\captionof{table}{Study on external hyperparameters $\beta$ and $\epsilon$ on Phone-Email with PARROT. The best results are highlighted in \textbf{bold}.}
\label{tab:exp_ext_param}
\resizebox{\linewidth}{!}{
\begin{tabular}{lcccccccccccc}
        \toprule
        Hyperparameter & \multicolumn{6}{c}{penalizing factor $\beta$} & \multicolumn{6}{c}{entropic regularization weight $\epsilon$} \\
        \cmidrule(lr){1-1} \cmidrule(lr){2-7} \cmidrule(lr){8-13}
        Values & \multicolumn{2}{c}{1.0} & \multicolumn{2}{c}{0.5} & \multicolumn{2}{c}{0.1} & \multicolumn{2}{c}{6e-5} & \multicolumn{2}{c}{6e-4} & \multicolumn{2}{c}{6e-3} \\
        \cmidrule(lr){1-1} \cmidrule(lr){2-3} \cmidrule(lr){4-5} \cmidrule(lr){6-7} \cmidrule(lr){8-9} \cmidrule(lr){10-11} \cmidrule(lr){12-13}
        \#-th Query Round & 5 & 10 & 5 & 10 & 5 & 10 & 5 & 10 & 5 & 10 & 5 & 10 \\
        \midrule
        \textsc{Random}         & 0.517 & 0.568 & 0.290 & 0.335 & 0.289 & 0.321 & 0.352 & 0.394 & 0.517 & 0.568 & 0.218 & 0.252 \\
        \textsc{Entropy}        & 0.533 & 0.615 & 0.312 & 0.367 & 0.299 & 0.349 & 0.372 & 0.414 & 0.533 & 0.615 & 0.228 & 0.259 \\
        \textsc{Margin}         & 0.538 & 0.618 & 0.315 & 0.359 & 0.310 & 0.367 & 0.376 & 0.428 & 0.538 & 0.618 & 0.221 & 0.261 \\
        \textsc{TopMatchings}   & 0.512 & 0.576 & 0.308 & 0.329 & 0.301 & 0.352 & 0.358 & 0.409 & 0.512 & 0.576 & 0.231 & 0.279 \\
        \algname-$L_2$          & \textbf{0.539} & \textbf{0.629} & \textbf{0.329} & \textbf{0.391} & \textbf{0.321} & \textbf{0.369} & \textbf{0.389} & \textbf{0.437} & \textbf{0.539} & \textbf{0.629} & \textbf{0.249} & \textbf{0.283} \\
        \bottomrule
\end{tabular}
}
\end{minipage}
\end{table}

\section{Limitations \& Future Works}
In this section, we discuss some of the potential limitations of the proposed \algname. Firstly, the benefit of AvAtar may be partially reduced in some geometric matching tasks, e.g., point cloud registration (PCR), where the cost functions are relatively robust due to rich geometric information. However, this does not not render AvAtar inapplicable to PCR, especially under imperfect cost design due to outlier points and noisy features~\cite{qin2023geotransformer,qiu2024tucket}. Secondly, while \algname\ can be easily extended to some OT variants, e.g., unbalanced OT~\cite{gabriel2019computational}, and Gromov-Wasserstein distance~\cite{peyre2016gromov}, with entropic regularization, its extension to neural OT~\cite{korotin2022neural} remains an interesting future work.


\end{document}